\newcommand{\cmark}{\textcolor{ForestGreen}{\ding{51}}} % Sharp check mark
\newcommand{\xmark}{\textcolor{red}{\ding{55}}}       % Sharp X mark
\newcommand{\eg}{e.g.,\ }
\newcommand{\E}{\mathbb{E}}
\def\vv{{\bm{v}}}
\def\sR{{\mathbb{R}}}
\theoremstyle{plain}
\newtheorem{theorem}{Theorem}[section]
\newtheorem{lemma}[theorem]{Lemma}
\theoremstyle{definition}
\theoremstyle{remark}
\newtheorem{remark}[theorem]{Remark}
\icmltitlerunning{SAD-Flower: Flow Matching for Safe, Admissible, and Dynamically Consistent Planning}
\begin{document}

\twocolumn[
  \icmltitle{SAD-Flower: Flow Matching for Safe, Admissible,  \\
  and Dynamically Consistent Planning
    }

  % It is OKAY to include author information, even for blind submissions: the
  % style file will automatically remove it for you unless you've provided
  % the [accepted] option to the icml2026 package.

  % List of affiliations: The first argument should be a (short) identifier you
  % will use later to specify author affiliations Academic affiliations
  % should list Department, University, City, Region, Country Industry
  % affiliations should list Company, City, Region, Country

  % You can specify symbols, otherwise they are numbered in order. Ideally, you
  % should not use this facility. Affiliations will be numbered in order of
  % appearance and this is the preferred way.
  \icmlsetsymbol{equal}{*}

  \begin{icmlauthorlist}
    \icmlauthor{Tzu-Yuan Huang}{TUM}
    \icmlauthor{Armin Lederer}{NUS}
    \icmlauthor{Dai-Jie Wu}{NTU,UU}
    \icmlauthor{Xiaobing Dai}{TUM}\\
    \icmlauthor{Sihua Zhang}{BIT}
    \icmlauthor{Hsiu-Chin Lin}{McGill}
    \icmlauthor{Shao-Hua Sun}{NTU,aicore}
    \icmlauthor{Stefan Sosnowski}{TUM}
    \icmlauthor{Sandra Hirche}{TUM,MIRMI,MDSI}
  \end{icmlauthorlist}

  \icmlaffiliation{TUM}{TUM School of Computation, Information and Technology, Technical University of Munich, Munich, Germany.}
  \icmlaffiliation{MIRMI}{Munich Institute of Robotics and Machine Intelligence (MIRMI)}
  \icmlaffiliation{MDSI}{Munich Data Science Institute (MDSI)}
  \icmlaffiliation{NUS}{National University of Singapore}
  \icmlaffiliation{NTU}{National Taiwan University (NTU)}
  \icmlaffiliation{aicore}{NTU Artificial Intelligence Center of Research Excellence (NTU AI-CoRE)}

  \icmlaffiliation{UU}{University of Utah}
  \icmlaffiliation{BIT}{Beijing Institute of Technology}
  \icmlaffiliation{McGill}{McGill University}

  \icmlcorrespondingauthor{Tzu-Yuan Huang}{tzu-yuan.huang@tum.de}

  % You may provide any keywords that you find helpful for describing your
  % paper; these are used to populate the "keywords" metadata in the PDF but
  % will not be shown in the document
  \icmlkeywords{Machine Learning, Generative Model, Flow Matching, Robot Planning, Safety, Admissibility, Dynamic Consistency, ICML}

  \vskip 0.3in
]

% this must go after the closing bracket ] following \twocolumn[ ...

% This command actually creates the footnote in the first column listing the
% affiliations and the copyright notice. The command takes one argument, which
% is text to display at the start of the footnote. The \icmlEqualContribution
% command is standard text for equal contribution. Remove it (just {}) if you
% do not need this facility.

% Use ONE of the following lines. DO NOT remove the command.
% If you have no special notice, KEEP empty braces:
\printAffiliationsAndNotice{}  % no special notice (required even if empty)
% Or, if applicable, use the standard equal contribution text:
% \printAffiliationsAndNotice{\icmlEqualContribution}

\begin{abstract}
  Flow matching (FM) has shown promising results in data-driven planning. However, it inherently lacks formal guarantees for ensuring state and action constraints, whose satisfaction is a fundamental and crucial requirement for the safety and admissibility of planned trajectories on various systems. Moreover, existing FM planners do not ensure the dynamical consistency, which potentially renders trajectories inexecutable. We address these shortcomings by proposing SAD-Flower, a novel framework for generating \textbf{S}afe, \textbf{A}dmissible, and \textbf{D}ynamically consistent trajectories. Our approach relies on an augmentation of the flow with a virtual control input. Thereby, principled guidance can be derived using techniques from nonlinear control theory, providing formal guarantees for state constraints, action constraints, and dynamic consistency. Crucially, SAD-Flower operates without retraining, enabling test-time satisfaction of unseen constraints. Through extensive experiments across several tasks, we demonstrate that SAD-Flower outperforms various generative-model-based baselines in ensuring constraint satisfaction. Video and demos can be found at \href{https://sadflowerplanning.github.io/project_web/}{sadflowerplanning.github.io}. \looseness=-1
\end{abstract}

\section{Introduction}
Generative models have recently emerged as powerful tools for trajectory planning, with diffusion models~\cite{DDPM} and flow matching~\cite{lipman2023flow} enabling the generation of complex, long-horizon behaviors by directly learning from data. Unlike traditional data-driven planners that combine learned dynamics with optimization routines \cite{posa2014direct, kalakrishnan2011stomp}, generative approaches avoid model exploitation—where optimizers produce trajectories that perform well under the model but fail in reality due to approximation errors \cite{talvitie2014model, ke2019modeling}. By training models to generate full trajectories that implicitly encode system dynamics, generative planners naturally capture multimodal \cite{huang2024toward}, high-dimensional behaviors while mitigating compounding errors and supporting task compositionality. These advantages make generative approaches increasingly attractive for real-world planning and control tasks.

Despite these advantages, a critical limitation of generative model planners lies in their inability to guarantee constraint satisfaction -- specifically, state and action constraints. State constraints ensure safety, \eg avoiding collisions, while action constraints guarantee admissibility, \eg respecting torque or power limits, which makes these constraints essential in domains such as robotics \cite{craig2009introduction}. However, constraint satisfaction at individual time steps is insufficient: since trajectories are sequences of states (and actions), subsequent states in trajectories cannot be chosen independently \cite{kelly2017introduction}. For a planned trajectory to be physically realizable and executable on the system, it must be dynamically consistent, i.e., planned states must evolve according to the system dynamics. However, existing generative planners offer no inherent mechanism to enforce such properties, especially under constraints unseen during training. These factors make constraint satisfaction particularly challenging at test time and motivate the need for formally grounded methods that can reliably ensure safety, admissibility, and consistency in generated trajectories. \looseness=-1

Several training-free methods have been proposed to address parts of this problem, including guidance-based sampling \cite{yuan2023physdiff}, e.g., Classifier Guidance \cite{CG}, Decision Diffuser \cite{ajayDD}, control-inspired guidance, e.g., CoBL-diffusion \citep{mizuta2024cobl}, and constraint-aware diffusion, e.g., SafeDiffuser \citep{xiao2023safediffuser} variants. While these approaches can adapt to test-time constraints, they remain fundamentally limited. As summarized in \cref{tab:compare_related_wrok}, most methods focus on state constraints, while action constraints and dynamic consistency are often neglected or addressed only implicitly. Moreover, they mostly rely on soft guidance, which lacks formal guarantees and can result in residual violations. This leaves a gap for methods that can provide theoretical guarantees for state and action constraints, and dynamic consistency at test time. \looseness=-1

\begin{table}[t]
\centering
\caption{\textbf{Generative model-based planner comparison.}}
\label{tab:compare_related_wrok}

\resizebox{1\columnwidth}{!}{%
\begin{tabular}{
    >{\centering\arraybackslash}m{3.3cm}
    >{\centering\arraybackslash}p{1.4cm} 
    >{\centering\arraybackslash}p{1.4cm} 
    >{\centering\arraybackslash}p{1.7cm} 
    >{\centering\arraybackslash}p{1.4cm} 
}
\toprule

\multirow{2}{*}{\centering Methods}
&
{State constraint}
&
{Action constraint}
&
{Dynamic consistency}
&
{Theoretical guarantee} \\

\midrule

\shortstack[c]{Classifier Guidance\\ \citep{CG}}
& \cmark & \xmark & \xmark & \xmark \\
\cmidrule{1-5}

\shortstack[c]{CoBL-diffusion\\ \citep{mizuta2024cobl}}
& \cmark & \xmark & \cmark & \xmark \\
\cmidrule{1-5}

\shortstack[c]{SafeDiffuser\\ \citep{xiao2023safediffuser}}
& \cmark & \xmark & \xmark & \cmark \\
\cmidrule{1-5}

\shortstack[c]{Decision Diffuser\\ \citep{ajayDD}}
& \cmark & \xmark & \cmark & \xmark \\
\cmidrule{1-5}

\shortstack[c]{SAD-Flower\\ (Ours)}
& \cmark & \cmark & \cmark & \cmark \\

\bottomrule
\end{tabular}
}
\end{table}

To address these challenges, we propose \textbf{SAD-Flower} -- a novel control-augmented flow matching framework designed to generate \textbf{S}afe, \textbf{A}dmissible, and \textbf{D}ynamically consistent trajectories. Inspired by guidance-based approaches, SAD-Flower introduces a virtual control input into the generation process. This control-theoretic interpretation enables a principled design of test-time guidance signals to ensure strong guarantees. The foundation of the design lies in a novel reformulation of state and action constraints into Control Barrier Function (CBF) conditions \cite{CBF_Ames} for the generation process, while dynamic consistency is transformed into a Control Lyapunov Function (CLF) condition \cite{CLF}. These conditions are encoded as hard constraints in a constrained optimal control problem, solved efficiently using a quadratic program during sampling, allowing SAD-Flower to enforce novel constraints at test time without retraining. Unlike general constraint-projection approaches \cite{romer2024diffusion, bouvier2025ddat}, which may distort the learned generative distribution, or control-guidance methods like CoBL \cite{mizuta2024cobl}, which offer no guarantees and operate only over action sequences, SAD-Flower directly enforces state and action constraints over full trajectories. By leveraging prescribed-time control \cite{SONG2017243}, we ensure satisfaction by the final step while preserving generation quality. These theoretical guarantees translate into empirical performance: SAD-Flower consistently satisfies all constraints across benchmarks and remains robust under increasingly strict test-time conditions. Moreover, we demonstrate its scalability to the dexterous grasping task, validating its practical applicability to high-dimensional robotic systems.

\section{Related Work}
\textbf{Diffusion and Flow-Based Generative Models for Planning. } Recent advances in generative modeling, including diffusion models \cite{sohl2015deep, DDPM, song2020score} and flow-matching \cite{lipman2023flow, albergo2022building}, have shown remarkable performance across various domains such as image generation \cite{CG, du2020compositional, Hiranaka2025humanfeedback} and language modeling \cite{liu2023audioldm, saharia2022photorealistic}. These generative approaches have also been successfully applied to data-driven planning, where the model learns to imitate expert behavior from datasets \cite{chen2024diffusion, lai2024diffusion, huang2024diffusion}. For example, some works generate entire state-action trajectories directly using one \cite{janner2022planning, zheng2023guided} or two separate models \cite{zhou2024diffusion}, while others predict high-level trajectories and rely on a downstream controller to compute low-level actions \cite{chi2023diffusion, ajayDD, ko2024learning, ko2025implicit}. However, these generative model planners operate without mechanisms to ensure that generated trajectories respect real-world constraints. In particular, they lack formal guarantees for satisfying state and action constraints, as well as dynamic consistency. \looseness=-1

\textbf{Constraint-Aware Generative Model Planner. } To address the issue of constraint satisfaction, several recent works have explored constraint-aware planning based on generative models. Guidance-based methods \cite{CG, yuan2023physdiff, kondo2024cgd, ma2025constraint, carvalho2023motion} incorporate constraints by injecting gradients of auxiliary cost functions into the sampling process. While this encourages constraint satisfaction, it provides only a soft inductive bias without formal guarantees. Classifier-free guidance \cite{ho2022classifier} leverages information, such as constraint violation \cite{ajayDD, hung2025efficient, yeh2025action} during training, enabling constraint-aware generation. However, these approaches require additional labeled data and have limited generalization to novel constraints. Similarly, DDAT \cite{bouvier2025ddat} incorporates projection into the feasible set during training and inference but relies on strong assumptions, such as convexity of the constraint set. Post-processing approaches \cite{maze2023diffusion, giannone2023aligning} attempt to enforce constraints by optimizing generated samples after denoising, but they are unaware of the learned data distribution and can produce samples that significantly drift from it. \looseness=-1

\textbf{Control-Theoretic Enforcement in Generative Planning. } Several works have proposed control-theoretic techniques to enforce constraints. The works \cite{xiao2023safediffuser, botteghi2023trajectory, dai2025safe} employ Control Barrier Functions (CBFs; \citealt{CBF_Ames}) to enforce state constraints during the denoising process. However, these methods neglect action constraints and often produce trajectories that are not dynamically consistent and suffer from the local trap problem \cite{xiao2023safediffuser}. In CoBL \cite{mizuta2024cobl}, Control Lyapunov functions (CLFs; \citealt{CLF}) are used as a reward to promote goal-reaching behavior, but action constraints are not considered, and the guidance-based structure lacks formal guarantees. A constrained optimal control layer is integrated into the denoising process %of diffusion models 
in \cite{romer2024diffusion} to enforce state and action constraints. Since it performs non-convex optimization throughout the entire sampling process, it exhibits a high computational cost and potentially steers samples prematurely before they reflect meaningful structure \cite{fan2025cfg}.\looseness=-1

\section{Problem Formulation}
We formally define the trajectory planning problem with safety, admissibility, and dynamic consistency constraints as follows. \looseness=-1

\textbf{System Model.}
We consider a nonlinear dynamical system with state $\bm{s}(k) \in \mathbb{R}^n$ and action $\bm{a}(k) \in \mathbb{R}^m$ at time $k$, evolving as
\begin{equation}
    \bm{s}(k+1) = \bm{f}(\bm{s}(k), \bm{a}(k)), 
    \label{eq:system}
\end{equation}
where $\bm{f}$ is the (possibly unknown) transition function. A trajectory is defined as a sequence of state-action pairs, $\bm{\tau} = \{ (\bm{s}(0), \bm{a}(0)), \ldots, (\bm{s}(H-1), \bm{a}(H-1)) \}$. Given a dataset of expert trajectories $\mathcal{D} = \{\bm{\tau}^{(n)}\}_{n=1}^N$, our goal is to learn planning new trajectories that both imitate expert behavior and respect all deployment constraints.

\textbf{Constraints.}
To ensure reliable real-world execution, every generated trajectory must, at every time step, satisfy:
\textbf{(1) Safety:} the state remains within a safe set $\mathbb{S}$ (\eg avoid collisions; \eqref{prob:safety});
\textbf{(2) Admissibility:} the action is within an admissible set $\mathbb{A}$ (\eg satisfy torque or speed limits; \eqref{prob:admiss});
and \textbf{(3) Dynamic Consistency:} the trajectory obeys the system dynamics \eqref{prob:consis}.
Neglecting any of these leads to unsafe, infeasible, or unrealizable plans: for example, trajectories may pass through obstacles (violating safety), demand unattainable actions (violating admissibility), or include state transitions that cannot be executed by the system (violating dynamics).

Formally, these requirements are posed on the distribution $p^{\bm{\theta}}(\bm{\tau})$ of the learned planner as follows:
\begin{align}
    &\forall \bm{\tau}\sim p^{\bm{\theta}}(\bm{\tau}): \quad \bm{s}(k) \in \mathbb{S},\quad \forall k=0,\ldots, H-1,
    \tag{\textbf{SC}}\label{prob:safety}\\
    &\forall \bm{\tau}\sim p^{\bm{\theta}}(\bm{\tau}): \quad \bm{a}(k) \in \mathbb{A},\quad \forall k=0,\ldots, H-1,
    \tag{\textbf{AC}}\label{prob:admiss}\\
    &\forall \bm{\tau}\sim p^{\bm{\theta}}(\bm{\tau}): \quad \bm{s}(k) = \bm{f}(\bm{s}(k-1), \bm{a}(k-1)), \nonumber\\
    &\forall k=1,\ldots, H-1.
    \tag{\textbf{DC}}\label{prob:consis}
\end{align}
\textbf{Objective.}
Given expert data $\mathcal{D}$ and constraint sets $\mathbb{S}$, $\mathbb{A}$, our goal is to learn a generative model $p^{\bm{\theta}}(\bm{\tau})$ that (i) matches the expert trajectory distribution, and (ii) ensures all sampled trajectories satisfy eqs. (\ref{prob:safety}), (\ref{prob:admiss}), and (\ref{prob:consis}). This setting motivates methods that can flexibly enforce constraints, even as requirements change at test time.

\section{Background: Learning to Plan with Flow Matching}

When a trajectory data set $\mathcal{D}$ of an expert planner is given, Flow Matching (FM; \citealt{lipman2023flow, zheng2023guided}) is an effective technique to learn the distribution $p(\bm{\tau})$ of the data $\mathcal{D}$. In FM, the unknown distribution $p(\bm{\tau})$ is considered as the desired endpoint of a probability path $p_t^{\bm{\theta}}(\bm{\tau})$, $t\in[0,1]$.
The remainder of the probability path $p_t^{\bm{\theta}}(\bm{\tau})$ is characterized by a time-dependent vector field $\vv^{\bm{\theta}}_t:[0,1]\times\sR^{(n+m)H} \rightarrow \sR^{(n+m)H}$ parameterized by $\bm{\theta}$ that acts on samples $\bm{\tau}_0\sim p_0(\bm{\tau})$ via the flow\looseness=-1
\begin{equation}
    \dot{\bm{\tau}}_t=\tfrac{d}{dt}\bm{\tau}_t=\vv^{\bm{\theta}}_t(\bm{\tau}_t),
    \label{eq:flow_ode_s0}
\end{equation}
whereby the prior $p_0(\bm{\tau})$ is typically set to a Gaussian \cite{lipman2023flow}.
By prescribing a path from samples $\bm{\tau}_0$ of $p_0(\bm{\tau})$ to data trajectories $\bm{\tau}_1$ via a scheduled interpolation $\bm{\tau}_t=\alpha(t)\bm{\tau}_1+\beta(t)\bm{\tau}_0$ with $\alpha$, $\beta$ such that $\alpha(0)=0$, $\beta(1)=0$ and $\alpha(t)+\beta(t)=1$, the distribution learning problem is transformed into the supervised learning problem\looseness=-1
\vspace{-0.05cm}
\begin{align}
    \mathcal{L}_{\text{CFM}}(&\bm{\theta}) \!=\!\\
    &\E_{t \sim \mathcal{U}[0,1], \bm{\tau}_t \sim p_t^{\bm{\theta}}(\bm{\tau}),\bm{\tau}_1 \sim p(\bm{\tau})}  ||\bm{v}_t^{\bm{\theta}}(\bm{\tau}_t) - \bm{v}_t(\bm{\tau}_0, \bm{\tau}_1)||_2^2,\nonumber
\end{align}
\vspace{-0.05cm}%
where $\bm{v}_t(\bm{\tau}_0, \bm{\tau}_1) = \dot{\alpha}(t)\bm{\tau}_1 + \dot{\beta}(t)\bm{\tau}_0$ follows from the interpolation. Minimizing this cost function via stochastic gradient descent, $\vv^{\bm{\theta}}_t(\bm{\tau}_t)$ can be efficiently trained using the trajectory data set $\mathcal{D}$.

Given an initial state $\bm{s}_0$ and a trained vector field $\vv^{\bm{\theta}}_t(\bm{\tau}_t)$, we sample a random trajectory $\bm{\tau}_0$ from the prior distribution $p_0$ and numerically solve the ordinary differential equation (ODE) in \eqref{eq:flow_ode_s0} using $\bm{\tau}_0$ as the initial condition to obtain $\bm{\tau}_1$.  Thereby, the trajectories effectively become samples $\bm{\tau}_1\sim p^{\bm{\theta}}(\bm{\tau})$, where $p^{\bm{\theta}}(\bm{\tau})$ is implicitly represented through $\bm{v}^{\bm{\theta}}_t(\bm{\tau_t})$. While such samples can be directly used in planning, they generally do not satisfy the safety, admissibility, and dynamic consistency constraints in eqs.~(\ref{prob:safety}), (\ref{prob:admiss}), and (\ref{prob:consis}). \looseness=-1

\begin{remark}
    To allow plans with given initial states $\bm{s}_0$, we only need to condition the initial distribution on $\bm{s}(0)=\bm{s}_0$ and ensure $\dot{\bm{\tau}}_t^{\bm{s}(0)}=\bm{0}$ for all $t\in[0,1]$. For training, $\bm{s}_0$ is chosen as the first state of trajectories in the data set, while arbitrary values can be set when sampling trajectories.
\end{remark}

\section{Control-Augmented Flow Matching for Constrained Planning}
Despite the power of FM-based planners for trajectory generation, ensuring safety, admissibility, and dynamic consistency remains challenging, particularly under new test-time constraints. We address this with SAD-Flower, a control-augmented flow matching framework that provides formal guarantees for constraint satisfaction. In \cref{sec:method_intuition}, we outline the control-theoretic intuition and its integration with FM. \cref{sec:generation_process} details how constraint-aware quadratic programming augments sampling, and \cref{sec:theory} presents theoretical guarantees of constraint satisfaction. \looseness=-1

\subsection{Control Augmentation for Safety, Admissibility, and Dynamic Consistency}
\label{sec:method_intuition}

\begin{figure}[t]
    \centering
    \includegraphics[width=\columnwidth]{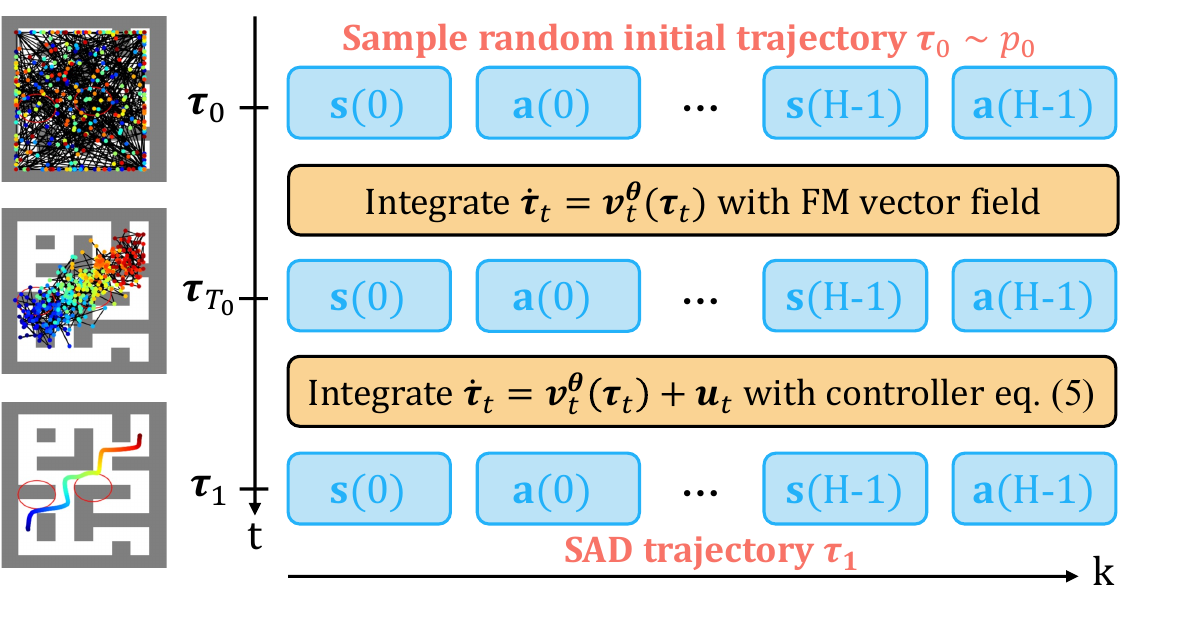}
    \caption{Overview of the trajectory generation with our pipeline.}
    \label{fig:method_pipeline}
\end{figure}

To ensure the satisfaction of safety \eqref{prob:safety}, admissibility \eqref{prob:admiss} and dynamic consistency \eqref{prob:consis}, we extend the formulation in \eqref{eq:flow_ode_s0} at test time to a controlled dynamical system
\begin{equation}
    \dot{\bm{\tau}}_t= \vv^{\bm{\theta}}_t(\bm{\tau}_t)+\bm{u}_t,
    \label{eq:control_ode}
\end{equation}
where the vector field $\!\vv^{\bm{\theta}}_t(\bm{\tau}_t)$ represents the drift, while $\bm{u}_t$ is a control input. By choosing $\bm{u}_t\!=\!\bm{0}$, we recover standard flow matching as a special case of this formulation. Framing the problem in this way enables us to view the requirements in eqs.~(\ref{prob:safety}), (\ref{prob:admiss}), and (\ref{prob:consis}) as system properties, so that their satisfaction becomes a matter of control design with the following specifications.
\looseness=-1

\textbf{From State/Action Constraints to Barrier Specifications.} State and action constraints are set inclusion conditions, which require the controlled flow in \eqref{eq:control_ode} to converge to and subsequently maintain constraint satisfaction. This behavior can be formalized via control barrier functions (CBFs; \citealt{CBF_Ames}), whose level sets can encode the constraint sets $\mathbb{A}$ and $\mathbb{S}$. Thereby, we express state and action constraints as a condition on the growth of CBFs along the flow.

\textbf{From Dynamic Consistency to Lyapunov Specifications.} Dynamic consistency is an equality condition, whose violation needs to decay to $0$ along the flow in \eqref{eq:control_ode}. This property can be formalized using control Lyapunov functions (CLFs; \citealt{CLF})  -- energy-like, non-negative functions with a minimum of $0$ at the desired equilibrium. Hence, we formulate dynamic consistency as a condition on the decay of a CLF along the generation process of the flow.

\textbf{Prescribed-Time Specifications.} While FM simulates ODEs for time intervals $t\in[0,1]$, Lyapunov and barrier specifications usually relate to asymptotic guarantees with $t\rightarrow \infty$. This discrepancy necessitates the scheduling of a sufficiently fast growth of CBFs and decrease of CLFs along the flow, which corresponds to a prescribed-time specification for control \cite{SONG2017243}.\looseness=-1

Our approach --  SAD-Flower -- splits the numerical integration of \eqref{eq:control_ode} into two phases as illustrated in \cref{fig:method_pipeline}. In the uncontrolled phase ($0\leq t<T_0$), trajectories evolve under the learned FM vector field without intervention ($\bm{u}_t=\bm{0}$) to preserve sample diversity \cite{fan2025cfg}. In the controlled phase ($T_0\leq t\leq 1$), the control law $\bm{u}_t$ satisfying CLF, CBF, and prescribed-time specifications is applied when integrating \eqref{eq:control_ode}. Thereby, the activation time $T_0\in(0,1)$ allows SAD-Flower to effectively balance generative flexibility with formal guarantees on safety \eqref{prob:safety}, admissibility \eqref{prob:admiss}, and dynamic consistency \eqref{prob:consis}.\looseness=-1

\subsection{Control Design Using Control Lyapunov and Barrier Functions}
\label{sec:generation_process}

Given the control specifications in \cref{sec:method_intuition}, the actual control design problem remains. We first derive dedicated CBF and CLF constraints, which employ scheduling functions to ensure a sufficient growth/decrease rate. These constraints are exploited in an optimization-based control law.\looseness=-1

\textbf{Control Barrier Constraints.} Since state/action constraints are specified separately for each time step, we design state CBFs $h_k^s(\bm{\tau}_t)$ and action CBFs $h_k^a(\bm{\tau}_t)$ for each time step $k = 0, \ldots, H{-}1$. Each CBF itself is defined as a signed distance function (SDF; \citealt{park2019deepsdf, long2021learning}), which measures the distance to the boundary of the set $\mathbb{S}$ and $\mathbb{A}$, respectively, and assigns a sign based on the inclusion in the constraint set.\footnote{Formal definition of $h_k^s(\bm{\tau}_t)$ and $h_k^a(\bm{\tau}_t)$ are in Appendix \ref{append:SDF}} Thus, these functions are only positive if states $\bm{s}(k)$ and actions $\bm{a}(k)$ are inside the sets $\mathbb{S}$ and $\mathbb{A}$, respectively. Non-negativity of the CBFs can be ensured by constraining the evolution of the CBF values along the flow, which results in the derivative condition 
\begin{align}
    & \dot{h}^s_k(\bm{\tau}_t) \geq -\varphi(t) h^s_k(\bm{\tau}_t), \quad \forall k=1,\ldots, H-1, \tag{CBF-s}\label{eq:CBF-s_constraint}\\
    & \dot{h}^a_k(\bm{\tau}_t)\geq -\varphi(t)h^a_k(\bm{\tau}_t), \quad\forall k=0,\ldots, H-1, \tag{CBF-a}\label{eq:CBF-a_constraint}
\end{align}
where $\dot{h}^{s/a}_k(\bm{\tau}_t)=\nabla^T h^{s/a}_k(\bm{\tau}_t) \left(\bm{v}_t^{\bm{\theta}}(\bm{\tau}_t)+\bm{u}_t\right)$ and $\varphi(t)$ is a scheduling function that we will design later.

\textbf{Control Lyapunov Constraints.} We define the CLF as the sum of squared consistency errors of a trajectory, i.e., 
$V(\bm{\tau}_t) = \frac{1}{2} \sum_{k=1}^{H-1} ||\bm{s}(k) - \bm{f}(\bm{s}(k-1), \bm{a}(k-1))||^2.$

This function is only $0$ if the trajectory $\bm{\tau}_t$ is dynamically consistent, such that we constrain the evolution of its value along the flow to be negative via
\begin{equation}\label{eq:CLF_constraint}\tag{CLF}
    \dot{V}(\bm{\tau}_t)\leq -\varphi(t)V(\bm{\tau}_t),
\end{equation}
where $\dot{V}(\bm{\tau}_t)=\nabla^T V(\bm{\tau}_t) \left(\bm{v}_t^{\bm{\theta}}(\bm{\tau}_t)+\bm{u}_t\right)$ \cite{CLF} and $\varphi(t)$ is a scheduling function that we will design later. Computing $\nabla^{T} V(\boldsymbol{\tau}_t)$ requires access to $\boldsymbol{f}$. In some robotic domains, this information is available through high-fidelity simulators \cite{gaz2019dynamic, howell2022dojo, acosta2022validating}. If there exists no accurate physics-based simulator, \eg for contact-rich manipulation tasks, a dynamics model can also be learned from the trajectory data. The learning errors directly correlate to the magnitude of dynamic consistency violations \cite{curi2020efficient}, such that practical consistency can still be achieved given a sufficiently precise learned model, and the \eqref{prob:safety}, \eqref{prob:admiss} can still be guaranteed, which is discussed in Appendix \ref{append:learned_dynamic_error}.

\begin{algorithm}[t] 
    \caption{Planning by SAD-Flower} 
    \label{alg:SAD_plan}
    \begin{algorithmic}[1] % [1] adds line numbers
        \STATE {\bfseries Input:} pretrained flow model $\vv^{\bm{\theta}}_t(\bm{\tau}_t)$
        \STATE Initialize $\bm{\tau}_0 \sim p_0(\bm{\tau})$
        \STATE Solve ODE $\dot{\bm{\tau}}_t= \vv^{\bm{\theta}}_t(\bm{\tau}_t)$ in $[0, T_0)$
        \STATE Solve ODE $\dot{\bm{\tau}}_t= \vv^{\bm{\theta}}_t(\bm{\tau}_t)+\bm{u}_t$ in $[T_0, 1]$ with our controller (\ref{eq:CBF_CLF_control_law}) to get $\bm{\tau}_1$
        \STATE {\bfseries Output:} SAD trajectory $\bm{\tau}_1$
    \end{algorithmic}
\end{algorithm}

\textbf{Prescribed-Time Scheduling.} 
To guarantee that the CBFs are positive and the CLF is $0$ at the terminal time $t = 1$ regardless of its state at $t = T_0$, we employ a scheduling function $\varphi(t) = \frac{c}{(1 - t)^2}$ with some constant $c > 0$. Due to the steep growth of $\varphi$ for $t\rightarrow 1$, the constraints in eqs. (\ref{eq:CBF-s_constraint}), (\ref{eq:CBF-a_constraint}) and (\ref{eq:CLF_constraint}) become increasingly more restrictive. This ensures that positivity of CBFs and a vanishing CLF are ensured at some time $t<1$ \cite{song2023prescribed}. \looseness=-1

\textbf{Constrained Minimum-Norm Optimal Control.} 
To ensure the satisfaction of CLF and CBF constraints, we formulate the minimum-norm optimal control problem 
\begin{align}\label{eq:CBF_CLF_control_law}
    \bm{u}_t=&\min\nolimits_{\bm{u}}||\bm{u}||^2
    &\text{s.t. (\ref{eq:CBF-s_constraint}), (\ref{eq:CBF-a_constraint}) and  (\ref{eq:CLF_constraint})}
\end{align}
hold, which ensures them by construction, while simultaneously minimizing the perturbation of the learned vector field $\bm{v}_t^{\bm{\theta}}(\bm{\tau}_t)$. Even though this optimization problem often consists of a large number of optimization variables and constraints, it can be solved comparatively efficiently since it is a quadratic program (QP). This renders the numerical integration of \eqref{eq:control_ode} with the control law in \eqref{eq:CBF_CLF_control_law} computationally tractable when using dedicated QP solvers.

\subsection{Theoretical Guarantees for Constraint Satisfaction and Consistency.} \label{sec:theory}

Due to the strong theoretical foundations of CBFs and CLFs, strong guarantees for safety, admissibility, and dynamic consistency can be provided in the following result.\footnote{A proof and extended discussion of the theorem's assumptions can be found in Appendix \ref{append:proof}.}
\begin{theorem}\label{th:CLF-CBF-All}
    Assume that the QP in \eqref{eq:CBF_CLF_control_law} is feasible for all $t\!\in\![T_0,1]$. Then, the solution $\bm{\tau}_t$ of \eqref{eq:control_ode} with control law $\bm{u}_t\!=\!\bm{0}$ for $t\!<\!T_0$ and $\bm{u}_t$ defined in \eqref{eq:CBF_CLF_control_law} for $t\! \geq\! T_0$ satisfies eqs. (\ref{prob:safety}), (\ref{prob:admiss}), and (\ref{prob:consis}) at $t\!=\!1$ for all initial conditions $\bm{\tau}_{0}$. \looseness=-1
\end{theorem}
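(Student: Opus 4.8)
The plan is to reduce each of the three constraint types to a scalar differential inequality along the controlled flow in \eqref{eq:control_ode} and then apply a comparison (Gr\"onwall-type) argument together with the divergence of the prescribed-time integral. By the feasibility assumption, for every $t\in[T_0,1]$ the QP in \eqref{eq:CBF_CLF_control_law} admits a minimizer $\bm{u}_t$ that, by construction, makes \eqref{eq:CBF-s_constraint}, \eqref{eq:CBF-a_constraint}, and \eqref{eq:CLF_constraint} hold pointwise in $t$. Hence along the realized trajectory we obtain, for all $t\in[T_0,1]$, the one-dimensional inequalities $\dot h^{s}_k(\bm{\tau}_t)\ge -\varphi(t)\,h^{s}_k(\bm{\tau}_t)$, $\dot h^{a}_k(\bm{\tau}_t)\ge -\varphi(t)\,h^{a}_k(\bm{\tau}_t)$, and $\dot V(\bm{\tau}_t)\le -\varphi(t)\,V(\bm{\tau}_t)$, each of which can be analyzed independently of the high-dimensional flow.

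First I would handle dynamic consistency through the CLF. Introducing the integrating factor $\mu(t)=\exp\!\big(\int_{T_0}^{t}\varphi(s)\,ds\big)$, the CLF inequality gives $\tfrac{d}{dt}\big(\mu(t)V(\bm{\tau}_t)\big)\le 0$, hence $V(\bm{\tau}_t)\le V(\bm{\tau}_{T_0})/\mu(t)$. With $\varphi(t)=c/(1-t)^2$ one computes $\int_{T_0}^{t}\varphi(s)\,ds=c\big(\tfrac{1}{1-t}-\tfrac{1}{1-T_0}\big)\to+\infty$ as $t\to 1$, so $\mu(t)\to\infty$ and $V(\bm{\tau}_t)\to 0$. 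Since $V\ge 0$ and vanishes exactly when the trajectory is dynamically consistent, passing to the limit yields $V(\bm{\tau}_1)=0$, i.e.\ \eqref{prob:consis}.

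Next I would treat the state and action CBFs in the same way. The identical integrating-factor step gives $h^{s}_k(\bm{\tau}_t)\ge h^{s}_k(\bm{\tau}_{T_0})/\mu(t)$ (and likewise for $h^{a}_k$). If $h^{s}_k(\bm{\tau}_{T_0})\ge 0$, the bound is non-negative for all $t$; if $h^{s}_k(\bm{\tau}_{T_0})<0$, the bound is negative but tends to $0^-$ as $\mu(t)\to\infty$, so $\liminf_{t\to 1}h^{s}_k(\bm{\tau}_t)\ge 0$. In both cases continuity at $t=1$ gives $h^{s}_k(\bm{\tau}_1)\ge 0$, and since $h^{s}_k$ is the signed distance to $\mathbb{S}$ this is exactly $\bm{s}(k)\in\mathbb{S}$; the same argument for $h^{a}_k$ gives $\bm{a}(k)\in\mathbb{A}$. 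The fixed initial state $\bm{s}(0)=\bm{s}_0$, which is excluded from \eqref{eq:CBF-s_constraint}, is covered by assuming $\bm{s}_0\in\mathbb{S}$ in line with the initial-condition remark. Together these establish \eqref{prob:safety} and \eqref{prob:admiss} at $t=1$.

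The step I expect to be the main obstacle is making the limit $t\to 1$ rigorous in spite of the singularity of $\varphi$ at $t=1$. Because $\varphi(t)\to\infty$, the QP constraints stiffen and the minimizer $\bm{u}_t$ could in principle grow without bound, so I must argue that the solution $\bm{\tau}_t$ of \eqref{eq:control_ode} extends continuously to the closed interval $[T_0,1]$ and that $V(\bm{\tau}_t)$ and $h^{s/a}_k(\bm{\tau}_t)$ converge to their endpoint values. The key estimate is that the comparison bounds keep the scheduled quantities $\varphi(t)V(\bm{\tau}_t)$ and $\varphi(t)h^{s/a}_k(\bm{\tau}_t)$ controlled on the binding side (indeed $\varphi(t)/\mu(t)\to 0$, since the exponential $\mu$ dominates the rational blow-up of $\varphi$), so the right-hand sides of the stiffened constraints stay finite and the required control does not blow up. Combined with the standing feasibility assumption at $t=1$, this secures continuity of $\bm{\tau}_t$ up to the terminal time, after which the three limit statements above close the proof.
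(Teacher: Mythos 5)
Your proof is correct, but it takes a more elementary and self-contained route than the paper. The paper's proof is essentially a two-line reduction: it observes that $h_k^{s,a}$ is a prescribed-time control barrier function and $V$ a prescribed-time control Lyapunov function, and then invokes \citep[Theorem 1]{PTCBF_TY} to conclude $h_k^{s,a}(\bm{\tau}_1)>0$ and \citep[Theorem 2]{SONG2017243} to conclude $V(\bm{\tau}_1)=0$. You instead re-derive those facts from scratch via the integrating factor $\mu(t)=\exp\bigl(\int_{T_0}^{t}\varphi(s)\,ds\bigr)$ and the divergence of $\int_{T_0}^{1}\varphi(s)\,ds$ for $\varphi(t)=c/(1-t)^2$, which is precisely the comparison argument underlying the cited prescribed-time results. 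What your version buys is transparency: it makes explicit why the particular scheduling function forces $V\to 0$ and $h_k^{s,a}\to[0,\infty)$ as $t\to 1$, it correctly isolates the one genuinely delicate point (extending the solution continuously to the closed interval $[T_0,1]$ despite the singularity of $\varphi$, i.e.\ ruling out blow-up of the QP minimizer $\bm{u}_t$), and it notices the edge case that $\bm{s}(0)$ is excluded from \eqref{eq:CBF-s_constraint} and must be handled by the initial-condition assumption $\bm{s}_0\in\mathbb{S}$ --- a detail the paper leaves implicit. What the paper's version buys is brevity and the slightly stronger conclusion $h_k^{s,a}(\bm{\tau}_1)>0$ inherited from the cited theorem, whereas your limit argument only yields $h_k^{s,a}(\bm{\tau}_1)\geq 0$ when $h_k^{s,a}(\bm{\tau}_{T_0})<0$; this is immaterial for the set-membership claims \eqref{prob:safety} and \eqref{prob:admiss} as long as $\mathbb{S}$ and $\mathbb{A}$ are closed. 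Your heuristic for the continuity-at-$t=1$ step (that $\varphi(t)/\mu(t)\to 0$ keeps the binding side of the constraints finite) is a sketch rather than a complete estimate, but the paper does not address this issue at all beyond delegating it to the cited references, so your treatment is, if anything, more careful than the original.
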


This result ensures that, under QP feasibility, SAD-Flower generates trajectories that are guaranteed to be safe, admissible, and dynamically consistent at the final time step.
\begin{remark}
While feasibility is a known challenge in general CBF-CLF based methods, it is typically not an issue in our framework. Due to the integrator-like structure of flow-matching dynamics, the individual CBF and CLF constraints are feasible under mild conditions (cf. Theorems~B.1 and~B.2). Infeasibility may occur only on measure-zero trajectory sets, which do not arise in practice during numerical integration of \eqref{eq:control_ode}. Although slack-variable relaxations \citep{boyd2004convex} can be used as a fallback in other methods, they are not needed here, as our original QP remains feasible throughout all experiments.
\end{remark}
\begin{remark}
When access to accurate system dynamics is limited, guarantees \eqref{prob:safety} and \eqref{prob:admiss} can still be retained by incorporating robustness mechanisms such as robust CBFs or constraint tightening, as outlined in Appendix \ref{append:learned_dynamic_error}. \looseness=-1
\end{remark}
\begin{remark}
Our control-theoretic formulation can be naturally applied to ODE-based generative samplers by treating the generative dynamics as a controlled system, as in \eqref{eq:control_ode}. We focus on flow matching because its deterministic ODE form directly matches standard CBF/CLF tools. Since diffusion models also admit ODE-based samplers \citep{song2020score}, they can be handled by the same control layer. Thus, SAD-Flower can be viewed as a general constraint-enforcing control layer for ODE-based generative sampling.
\end{remark}

\section{Experiment} \label{sec:exp}

\subsection{Experiment Setting}

We evaluate all methods in 4 different benchmarks: Maze2d, Hopper, and Walker2d from the D4RL problem set \citep{maze_env}, and Kuka Block-Stacking \citep{janner2022planning}.
\footnote{Experimental details are provided in Appendix \ref{append:env}.}

\begin{table*}[!t] % <--- CRUCIAL: Use table* and change placement to [!t]
\caption{Performance of the proposed SAD-Flower and baselines across navigation, locomotion, and manipulation tasks. The methods are compared on the maximum safety (\ref{prob:safety}) and admissibility (\ref{prob:admiss}) violations of planned trajectories, the dynamic consistency (\ref{prob:consis}) violation, and the model accuracy expressed through the reward. Truncate is not applicable in Maze2d (Umaze) due to more complex safety constraints, such that truncation becomes non-trivial \citep{xiao2023safediffuser}. Locomotion tasks are evaluated with two datasets: from a partially trained soft actor-critic policy \citep{haarnoja2018soft} (Medium, See Appendix \ref{append:additional_const}) and a mixture with expert data (Med-Expert). Lower constraint values (0 is perfect) indicate better adherence; higher reward indicates better performance. Bold denotes best results. \looseness=-1}
\label{tab:main_result}

\centering 
\resizebox{0.95\textwidth}{!}{
\begin{tabular}{
>{\centering\arraybackslash}m{2.0cm}
>{\centering\arraybackslash}m{1.8cm}
>{\centering\arraybackslash}m{1.55cm}
>{\centering\arraybackslash}m{1.55cm}
>{\centering\arraybackslash}m{1.55cm}
>{\centering\arraybackslash}m{1.55cm}
>{\centering\arraybackslash}m{1.55cm}
>{\centering\arraybackslash}m{1.61cm}
>{\centering\arraybackslash}m{1.55cm}
}
\toprule
\textbf{Experiment}& \textbf{Metric} & \textbf{Diffuser} & \textbf{Trunc} & \textbf{CG}& \textbf{FM} & \textbf{S-Diffuser} & \textbf{D-Diffuser} & \textbf{Ours} \\
\midrule
%---------------------------
\multirow{5}{2cm}{\centering Maze2d (Large)}
& safety &0.43$\pm$0.39 & 0.10$\pm$0.25 & 0.27$\pm$0.37 &0.37$\pm$0.39& \textbf{0.00$\bm{\pm}$0.00} & 0.83$\pm$0.24& \textbf{0.00}$\bm{\pm}$\textbf{0.00} \\
\cmidrule{2-9}
& admissib. & 0.09$\pm$0.01 & 0.11$\pm$0.05 & 0.12$\pm$0.03 & 0.13$\pm$0.01& 0.09$\pm$0.02& 0.13$\pm$0.04& \textbf{0.00}$\bm{\pm}$\textbf{0.00} \\
\cmidrule{2-9}
& dyn. consist. & 0.06$\pm$0.03 & 0.06$\pm$0.03 & 0.44$\pm$0.16& 0.02$\pm$0.00 & 0.09$\pm$0.06 & 2.78$\pm$0.06 & \textbf{0.01}$\bm{\pm}$\textbf{0.01} \\
\cmidrule{2-9}
& reward & 1.40$\pm$0.26 &1.39$\pm$0.26 & 0.4$\pm$0.33 & 1.43$\pm$0.20 & 1.20$\pm$0.06 & 0.38$\pm$ 0.18 & \textbf{1.69}$\bm{\pm}$\textbf{0.26}\\
\midrule %---------------------------
% (Maze2d (Umaze) data...)
\multirow{5}{2cm}{\centering Maze2d (Umaze)}
& safety &0.04$\pm$0.20 & --- & 0.51$\pm$0.33& 0.11$\pm$0.22 & 0.87$\pm$3.77 & 0.05$\pm$0.15& \textbf{0.00}$\bm{\pm}$\textbf{0.00}\\
\cmidrule{2-9}
& admissib. & 0.11$\pm$0.02 & --- & 0.10$\pm$0.01 & 0.09$\pm$0.01 & 0.14$\pm$0.02& 0.09$\pm$0.02& \textbf{0.00}$\bm{\pm}$\textbf{0.00} \\
\cmidrule{2-9}
& dyn. consist. & 0.05$\pm$0.02 & --- & 0.68$\pm$0.18 & \textbf{0.01}$\bm{\pm}$\textbf{0.01} & 0.10$\pm$0.10 & 1.80$\pm$0.11 & \textbf{0.01}$\bm{\pm}$\textbf{0.01}\\
\cmidrule{2-9}
& reward & 1.11$\pm$0.44 & --- &0.06$\pm$0.32 & 2.62$\pm$1.09 & 1.06$\pm$0.35 & 0.60$\pm$0.33 & \textbf{2.70}$\bm{\pm}$\textbf{0.67} \\
\midrule\midrule
% (Hopper (Med-Expert) data...)
\multirow{5}{2cm}{\centering Hopper (Med-Expert)} 
& safety & 0.01$\pm$0.02 & 0.05$\pm$0.04 & 0.07$\pm$0.03& 0.11$\pm$0.08 & 0.05$\pm$0.04 & 0.10$\pm$0.02& \textbf{0.00}$\bm{\pm}$\textbf{0.00}\\
\cmidrule{2-9}
& admissib. & 0.21$\pm$0.05 & 0.18$\pm$0.04 & 0.26$\pm$0.07& 0.17$\pm$0.05 & 0.18$\pm$0.04 &\textbf{0.00}$\bm{\pm}$\textbf{0.00}&\textbf{0.00}$\bm{\pm}$\textbf{0.00}\\
\cmidrule{2-9}
& dyn. consist. & 0.38$\pm$0.03 & 0.41$\pm$0.04 & 0.79$\pm$0.10 & 0.23$\pm$0.02 & 0.36$\pm$0.06 & 0.16$\pm$0.01 & \textbf{0.01}$\bm{\pm}$\textbf{0.01} \\
\cmidrule{2-9}
& reward & 1.06$\pm$0.18 & 0.50$\pm$0.12 & 0.73$\pm$0.022 & 1.02$\pm$0.20 & 0.53$\pm$0.19 & \textbf{1.12}$\bm{\pm}$\textbf{0.01} & 0.93$\pm$0.23\\
\midrule %---------------------------
% (Walker2d (Med-Expert) data...)
\multirow{5}{2cm}{\centering Walker2d\\ (Med-Expert)}
& safety & 0.06$\pm$0.04 & 0.06$\pm$0.05 & 0.02$\pm$0.03 & 0.40$\pm$0.11 & 0.09$\pm$0.07 & 0.04$\pm$0.04& \textbf{0.00}$\bm{\pm}$\textbf{0.00}\\
\cmidrule{2-9}
& admissib. & 0.67$\pm$0.18 & 0.62$\pm$0.14 & 0.72$\pm$0.18 & 0.15$\pm$0.02 & 0.58$\pm$0.20 &\textbf{0.00}$\bm{\pm}$\textbf{0.00}& \textbf{0.00}$\bm{\pm}$\textbf{0.00}\\
\cmidrule{2-9}
& dyn. consist. & 0.71$\pm$0.05 & 0.72$\pm$0.05 & 0.83$\pm$0.91 & 0.44$\pm$0.01 & 0.79$\pm$0.03 & 0.69$\pm$0.05 & \textbf{0.04}$\bm{\pm}$\textbf{0.04}\\
\cmidrule{2-9}
& reward & 1.06$\pm$0.23 & 0.56$\pm$0.29 & 0.39$\pm$0.19 & \textbf{1.07}$\bm{\pm}$\textbf{0.01} & 0.59$\pm$0.21 & 0.95$\pm$0.24 & 0.89$\pm$0.32\\
\midrule\midrule
% (KUKA Block Stacking data...)
\multirow{2}{2cm}{\centering KUKA Block\\ Stacking} 
& safety & 0.23$\pm$0.09 & \textbf{0.00}$\bm{\pm}$\textbf{0.00} & 0.22$\pm$0.09 &0.02$\pm$0.04 & \textbf{0.00}$\bm{\pm}$\textbf{0.00} & 0.14$\pm$0.13 & \textbf{0.00}$\bm{\pm}$\textbf{0.00} \\
\cmidrule{2-9}
& reward & 0.46$\pm$0.23 & 0.45$\pm$0.21 & 0.45$\pm$0.23& 0.44$\pm$0.20 & 0.49$\pm$0.23& \textbf{0.55}$\bm{\pm}$\textbf{0.26} & 0.45$\pm$0.21\\
\bottomrule
\end{tabular}
}
\end{table*}

\begin{itemize}
    \item \textbf{Maze2d} \citep{maze_env} is a navigation task where a point mass is moved from an initial state to a specified goal. Actions are artificially constrained to $[-0.1,0.1]$. State constraints are defined for two novel obstacles, which do not block feasible paths and therefore still allow the task to be completed.
    Training data is generated by a navigation planner for the given maze. We evaluate on two maze configurations: Large and Umaze.\looseness=-1
    \item \textbf{Hopper and Walker2d} \citep{maze_env} are locomotion tasks where a one-legged and a bipedal robot must move forward by jumping and walking, respectively. Actions are constrained to the range $[-1,1]$. State constraints are imposed by requiring the robot’s torso center to remain below a prescribed height (default: $1.6$), creating a conflict with the objective of fast forward movement.
    \item \textbf{Kuka Block-Stacking} \citep{janner2022planning} is a manipulation task where a 7-DOF robotic arm must stack a set of blocks. Unlike the other tasks, no action constraints or dynamic consistency requirements are enforced; only state constraints are applied, allowing us to study a variant of trajectory planning focused solely on state feasibility. These constraints ensure that self-collisions of the robot are avoided. Training data is generated using the PDDLStream planner \citep{garrett2020pddlstream}.
\end{itemize}

\textbf{Evaluation Metrics.} We evaluate the safety and admissibility violation of a trajectory via its maximal distance to the constraint sets $\mathbb{S}$ and $\mathbb{A}$, which we can effectively express through the CBFs as
$-\min_{k} \min\{h_k^{s,a}(\bm{\tau}),0\}$.
Thus, a value of $0$ means constraint satisfaction, while positive values imply a violation. Dynamical consistency is measured using the Lyapunov function, such that large values indicate inconsistency. The performance of planned trajectories is measured by D4RL normalized rewards and binary success rewards for stacking, as defined in \cite{janner2022planning}.\looseness=-1

\textbf{Baselines:} We compare our proposed SAD-Flower against the following generative planners:
\begin{itemize} % [left=0pt, itemsep=1pt, topsep=1pt, parsep=0pt, partopsep=0pt]
    \item \textbf{Diffuser} \citep{janner2022planning} generates trajectories using a diffusion model, without considering safety, admissibility, or dynamic consistency.
    \item \textbf{Truncation (Trunc)} \citep{locomotion_env} truncates diffusion-generated trajectories to satisfy constraints. \looseness=-1
    \item \textbf{Classifier Guidance (CG)} \citep{CG} augments diffusion-based trajectory generation with constraint-gradient guidance during sampling.  
    \item \textbf{Flow Matching (FM)} \citep{feng2025on} trains a FM model as a baseline to generate trajectories without ensuring safety, admissibility, or dynamic consistency. \looseness=-1
    \item \textbf{SafeDiffuser (S-Diffuser)}  \citep{xiao2023safediffuser} generates trajectories via diffusion, while projecting states onto the constraint sets using a CBF at each sampling step.  
    \item \textbf{Decision Diffuser (D-Diffuser)} \citep{ajayDD} trains a diffusion model to generate state trajectories conditioned on constraints, with the corresponding actions inferred from a learned inverse dynamics model.  
\end{itemize}

\subsection{Constraint Satisfaction Across Tasks}
As shown in \cref{tab:main_result}, our method consistently satisfies safety and admissibility constraints while matching the planning performance of other methods across tasks. Although perfect dynamical consistency is not reached, the remaining violations are minor and mainly due to numerical integration of \eqref{eq:control_ode}. These results demonstrate the effectiveness of our control-theoretic mechanism, which guarantees constraint satisfaction at test time. SAD-Flower achieves this performance at the cost of merely a small computation time increase compared to existing methods (Appendix, \cref{tab:main_result_time}).

In the navigation tasks of Maze2D, SAD-Flower achieves high rewards while maintaining complete constraint satisfaction. Since the task goals and imposed constraints are not in conflict, our method effectively balances planning performance and constraint adherence. Among baselines, SafeDiffuser employs a constraint-following mechanism during the diffusion sampling process to enforce state constraints, but it fails to guarantee admissibility. In contrast, Diffuser and FM achieve high rewards at the cost of violations in both safety and admissibility.  

In locomotion tasks, where dynamics are more complex, SAD-Flower is the only method that satisfies all constraints while maintaining strong dynamical consistency. Its slightly lower rewards stem from conflicts between the height constraint and the jumping/walking behaviors needed for high returns, making safety violations directly impact performance. For instance, CG achieves similar performance in Hopper (Med-Expert)
\begin{figure}[t]
    \centering
    \includegraphics[width=0.99\linewidth]{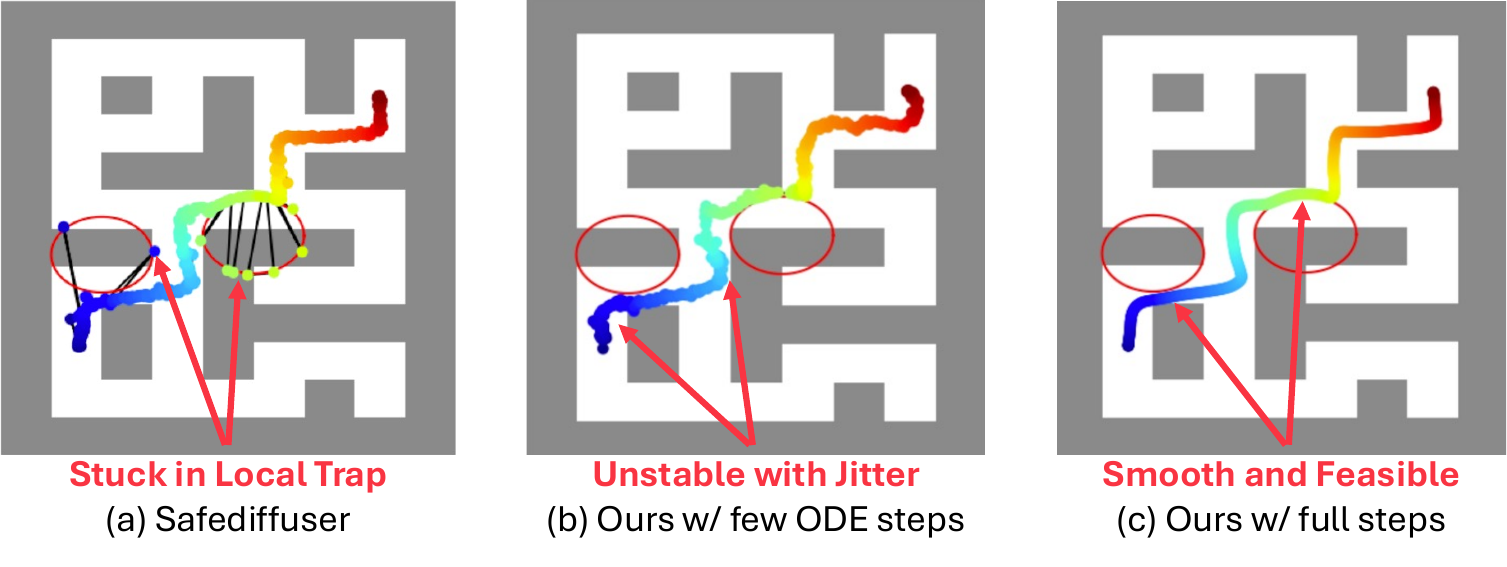}
    
    \caption{(a) Without enforcing dynamic consistency, applying state and action constraints can result in significant outliers, known as the local trap problem \cite{xiao2023safediffuser}. (b) Our method satisfies constraints, but using too few numerical integration steps for the ODE introduces jitter in the trajectory. (c) With sufficient integration steps, our method produces dynamically consistent trajectories while satisfying unseen constraints (red ellipses). }
    \label{fig:abla_combined}
\end{figure}
when safety is respected. While some methods, such as Decision Diffuser, consistently ensure admissibility, all struggle with dynamical consistency due to the complexity of the robot dynamics.

In the Kuka Block-Stacking task, which excludes admissibility and dynamical consistency requirements, SAD-Flower leverages the simplified setting to guarantee safety while achieving rewards competitive with other safety-enforcing baselines. This highlights both the effectiveness and flexibility of our proposed approach.

We also compare with two constraint-aware baselines: CoBL \citep{mizuta2024cobl} and reject-sampling (Appendix~\ref{append:other_baseline}). Reject-sampling filters candidate trajectories post-hoc, but fails under novel test-time constraints because the underlying generative model is trained without knowledge of these constraints, making valid samples unlikely to appear, as reported in \cref{sec:ablation_1}. CoBL uses CBF/CLF-inspired reward shaping during denoising, but provides only soft guidance: it predicts actions only, does not explicitly handle action constraints, and uses CLFs for stability rather than dynamic consistency. In contrast, SAD-Flower jointly generates state-action trajectories and enforces all test-time constraints directly over the full trajectory. \looseness=-1

\subsection{Why Does SAD-Flower Work Effectively?} \label{sec:ablation_1}

We analyze three key properties of SAD-Flower that contribute to its effectiveness.

\begin{figure}[b] 
    \centering
    \includegraphics[width=1.0\columnwidth]{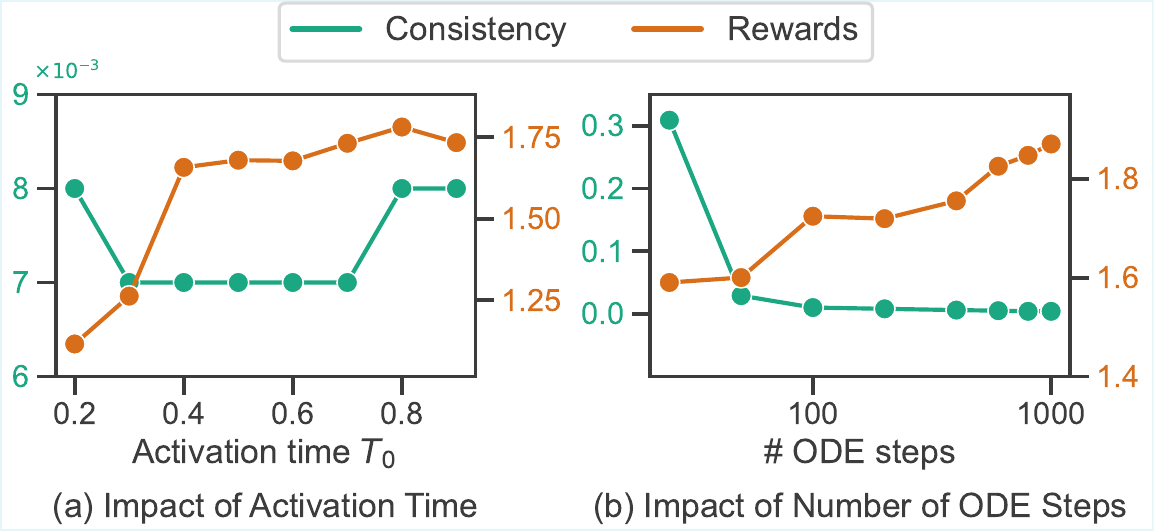} 
    \vspace{-0cm}
    \caption{Effect of increasing the number of ODE steps for solving
    \eqref{eq:control_ode} and the activation time $T_0$ of controller
    \eqref{eq:CBF_CLF_control_law} on the consistency and performance of trajectories.}
    \label{fig:ODE+time}
\end{figure}

\textbf{Dynamic Consistency Prevents Local Traps.} Projection-based constraint enforcement can introduce discontinuities in trajectories, causing the local trap problem \citep{xiao2023safediffuser}, as seen in \cref{fig:abla_combined}(a) for SafeDiffuser in Maze2D. This arises from treating constraints independently at each step. In contrast, SAD-Flower enforces dynamic consistency by coupling consecutive states and actions through the CLF, ensuring coherent evolution during integration of the flow. This coupling prevents misaligned guidance and eliminates the risk of local traps, as demonstrated in \cref{fig:abla_combined}(c). To verify the importance of this CLF term, we also ablate SAD-Flower by enforcing only safety and admissibility through CBFs. Without the CLF, planned actions may fail to realize the planned state trajectory, leading to dynamic inconsistency, safety violations, and local trapping behavior \cref{fig:abla_combined}(a). This confirms that CLF-based dynamic-consistency enforcement is essential for reliable constrained planning.\looseness=-1

\begin{table}[t] 
\centering
\caption{Benefits of QPs in SAD-Flower compared to general non-convex optimization of DPCC on Hopper (Med-Expert). The completed comparison is reported in Appendix \cref{tab:DPCC_compare}.  \looseness=-1} 
\label{tab:abla_QP}

\begin{large}
\resizebox{\columnwidth}{!}{ 
\begin{tabular}{
>{\centering\arraybackslash}m{1.4cm}
>{\centering\arraybackslash}m{1.8cm}
>{\centering\arraybackslash}m{1.8cm}
>{\centering\arraybackslash}m{1.8cm}
>{\centering\arraybackslash}m{1.8cm}
>{\centering\arraybackslash}m{1.8cm}
}
\toprule
\textbf{Methods} & safety & admissib. & dyn. consist. & reward & comp. time [s] \\
\midrule
%---------------------------
\textbf{DPCC} & 0.01$\pm$0.03& \textbf{0.00}$\bm{\pm}$\textbf{0.00} & \textbf{0.01}$\bm{\pm}$\textbf{0.01} & 0.61$\pm$0.07 & 4.24$\pm$1.64  \\
\cmidrule{1-6}
\textbf{Ours} & \textbf{0.00}$\bm{\pm}$\textbf{0.00} & \textbf{0.00}$\bm{\pm}$\textbf{0.00} & \textbf{0.01}$\bm{\pm}$\textbf{0.01} & \textbf{0.93}$\bm{\pm}$\textbf{0.23} & \textbf{0.06}$\bm{\pm}$\textbf{0.00} \\
\bottomrule
\end{tabular}
}
\end{large}

\end{table}

\textbf{Delayed Activation Avoids Premature Interventions.} 
In the initial stages of flow sampling, trajectory distributions are relatively unstructured, and premature control can push trajectories away from learned behaviors. Perturbations introduced in this phase are propagated throughout the flow in \eqref{eq:control_ode}, potentially degrading performance. SAD-Flower mitigates this by activating guidance only after a prescribed flow time $T_0$. As shown in \cref{fig:ODE+time} (left) for Maze2D, even a small $T_0$ achieves high rewards, while larger values can further improve performance with only minor increases in dynamic consistency violations. Safety and admissibility are satisfied across all $T_0$ values in our experiments (not shown due to space). These results highlight the benefit of our control-theoretic formulation, enabling delayed CLF and CBF activation without compromising constraint enforcement. \looseness=-1

\textbf{QP vs. Non-Convex Optimization.}
To assess the benefits of our QP-based formulation, we compare SAD-Flower with DPCC \citep{romer2024diffusion}, which integrates optimal control into generative planning but applies full-system non-convex optimization from the earliest sampling stages. As shown in \cref{tab:abla_QP}, both methods achieve comparable admissibility and dynamic consistency, but DPCC incurs much higher computation per sampling step and lower rewards, likely due to early optimization intervening before the generated samples develop meaningful structure. This highlights the efficiency of SAD-Flower, which achieves strong constraint satisfaction using QP solvers and delayed activation. Note though that the enforcement of constraints does still not come for free with QPs since the integration steps of the uncontrolled ODE merely take $0.01$s on average. This overhead could be further reduced by recent advances in CBF-based optimization, such as closed-form solutions \citep{ong2019universal} for some CBF-QPs or combining multiple CBF constraints into a single non-smooth CBF \citep{glotfelter2017nonsmooth}, which may simplify the optimization and reduce runtime. While these techniques are not yet plug-and-play in general settings, they represent an active direction in control research and could be directly incorporated into SAD-Flower.
\looseness=-1

\begin{figure}[b]
    \centering
    \includegraphics[width=1\linewidth]{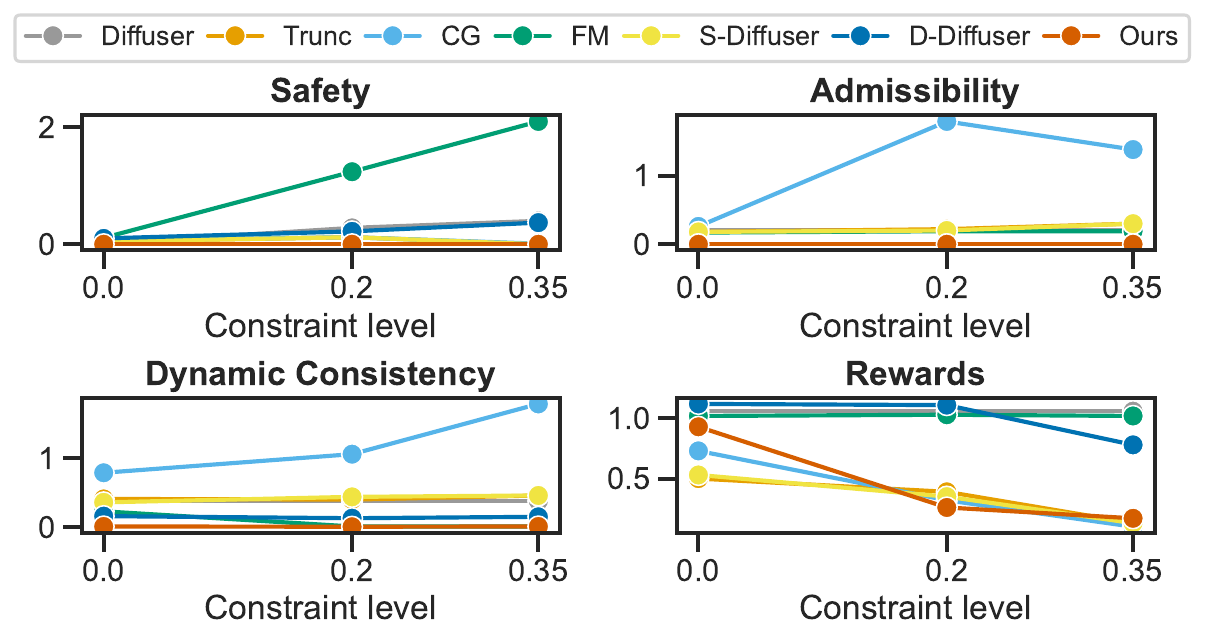}
    \caption{Performance of SAD-Flower and baselines under varying constraint tightening on Hopper (Med-Expert) benchmark.}
    \label{fig:abla_tighten_constraint}
\end{figure}

\textbf{Limitations of Naive Sampling-Then-Verification.}
We further compare SAD-Flower with rejection-sampling, a naive sampling-then-verification strategy for constraint handling. This baseline first trains a flow matching model on the original dataset and, at inference time, samples a batch of trajectory candidates, discarding those that violate constraints. However, because the test-time constraints are unseen during training, the generative model has no mechanism to produce trajectories inside the new feasible set. As shown in \cref{tab:abla_sample_veri}, this leads to persistent safety or admissibility violations across tasks. In contrast, SAD-Flower works by explicitly enforcing the unseen constraints during inference, rather than hoping they appear in sampled candidates.

\begin{table}[t] 
\centering 
\caption{Performance of sampling-then-verification (reject-sample) method across all tasks. The completed comparison is reported at \cref{tab:reject_Sample} in Appendix \cref{append:other_baseline}.  \looseness=-1} 
\label{tab:abla_sample_veri}

\resizebox{\columnwidth}{!}{ 
\begin{tabular}{
>{\centering\arraybackslash}m{3.6cm}
>{\centering\arraybackslash}m{1.6cm}
>{\centering\arraybackslash}m{1.6cm}
>{\centering\arraybackslash}m{1.6cm}
>{\centering\arraybackslash}m{1.6cm}
}
\toprule
\textbf{Experiment} & safety & admissib. & dyn. consist. & reward \\
\midrule
%---------------------------
\textbf{Maze2d(Large)} & 0.16$\pm$0.29& 0.11$\pm$0.00 & 0.02$\pm$0.01 & 1.52$\pm$0.28  \\
\cmidrule{1-5}
\textbf{Maze2d(UMaze)} & 0.04$\pm$0.09 & 0.10$\pm$0.01 & 0.01$\pm$0.01 & 2.91$\pm$0.65  \\
\cmidrule{1-5}
\textbf{Hopper(Med-Expert)} & 0.14$\pm$0.18& 0.05$\pm$0.02 & 0.02$\pm$0.01 & 0.49$\pm$0.33  \\
\cmidrule{1-5}
\textbf{Walker2d(Med-Expert)} & 0.08$\pm$0.10& 0.08$\pm$0.08 & 0.04$\pm$0.06 & 0.97$\pm$0.22  \\
\cmidrule{1-5}
\textbf{KUKA Block Stacking} & 0.01$\pm$0.01& --- & --- & 0.44$\pm$0.67  \\
\bottomrule
\end{tabular}
}

\end{table}

\subsection{How Reliable Is SAD-Flower?} 

To highlight the reliability of SAD-Flower, we demonstrate its behavior when approaching the extremes in terms of approximations for implementation and problem difficulty.

\textbf{Reducing Integration Accuracy.} We first illustrate the reliability of SAD-Flower when the accuracy of integrating the controlled ODE in \eqref{eq:control_ode} is small, which we measure through the number of ODE discretization steps. As illustrated on the right of \cref{fig:ODE+time}, increasing the number of ODE steps improves dynamic consistency since the controller in \eqref{eq:CBF_CLF_control_law} has more opportunities to intervene. Even with as few as 25 ODE steps, the inconsistency is relatively small and corresponds only to minor jittering behavior in the trajectories as illustrated in \cref{fig:abla_combined}. With $\approx 100$ ODE steps, dynamic consistency is practically achieved. Safety and admissibility are ensured for the whole range of considered ODE steps (not depicted). In contrast, the rewards continue to grow for finer discretizations, which cause higher computational complexity. Thus, SAD-Flower allows to trade-off performance and complexity, while simultaneously ensuring constraint satisfaction and dynamic consistency.\looseness=-1

\textbf{Handling Stricter Test-Time Constraints.} To evaluate the robustness of our method under unseen conditions, we test it with increasingly restrictive constraints by reducing the allowable torso height by $0.2$ and $0.35$ in the Hopper environment. As shown in \cref{fig:abla_tighten_constraint}, tighter constraints naturally reduce rewards due to their conflict with the task objective, a trend also observed in other state-constrained methods such as SafeDiffuser. Notably, the rewards of SAD-Flower are generally at least on par with these methods, even though the baselines exhibit significant admissibility and dynamic consistency violations. This clearly demonstrates the strong robustness of SAD-Flower in handling unseen test-time constraints. \looseness=-1

\begin{table}[!t] 
\centering 
\caption{Performance of SAD-Flower and FM in a dexterous grasping scenario (Adroit-Hand for Relocate tasks).} 
\label{tab:dexterous_grasping_performance}

\begin{large}
\resizebox{\columnwidth}{!}{ 
\begin{tabular}{
>{\centering\arraybackslash}m{2.0cm}
>{\centering\arraybackslash}m{1.8cm}
>{\centering\arraybackslash}m{1.8cm}
>{\centering\arraybackslash}m{2.2cm}
>{\centering\arraybackslash}m{1.8cm}
}
\toprule
\textbf{Methods} & safety & admissib. & dyn. consist. & reward \\
\midrule
%---------------------------
\textbf{FM} & 0.15$\pm$0.21& 0.62$\bm{\pm}$0.19 & 0.07$\bm{\pm}$0.04 & \textbf{1.07}$\bm{\pm}$\textbf{0.08} \\
\cmidrule{1-5}
\textbf{Ours} & \textbf{0.00}$\bm{\pm}$\textbf{0.00} & \textbf{0.00}$\bm{\pm}$\textbf{0.00} & \textbf{0.06}$\bm{\pm}$\textbf{0.09} & 1.05$\pm$0.23 \\
\bottomrule
\end{tabular}
}
\end{large}

\end{table}
\begin{table}[b]
\centering
\caption{Sensitivity of the hyperparameter $c$ on Maze2d (Large).}
\label{tab:maze2d_large_sensitivity}

\begin{large}
\resizebox{\columnwidth}{!}{%
\begin{tabular}{
>{\centering\arraybackslash}m{0.5cm}
>{\centering\arraybackslash}m{2.2cm}
>{\centering\arraybackslash}m{2.2cm}
>{\centering\arraybackslash}m{2.2cm}
>{\centering\arraybackslash}m{2.2cm}
}
\toprule
\textbf{$c$}
& \textbf{safety}
& \textbf{admissib.}
& \textbf{dyn. consist.}
& \textbf{reward} \\
\midrule

1.0
& $0.00{\pm}0.00$
& $0.00{\pm}0.00$
& $0.02{\pm}0.01$
& $1.65\pm0.32$ \\
\cmidrule{1-5}

0.8
& $0.00{\pm}0.00$
& $0.00{\pm}0.00$
& $0.02{\pm}0.01$
& $1.57\pm0.25$ \\
\cmidrule{1-5}

0.6
& $0.00{\pm}0.00$
& $0.00{\pm}0.00$
& $0.02{\pm}0.01$
& $1.52\pm0.26$ \\
\cmidrule{1-5}

0.4
& $0.00{\pm}0.00$
& $0.00{\pm}0.00$
& $0.01\pm0.01$
& $1.50\pm0.28$ \\
\bottomrule

\end{tabular}
}%
\end{large}
\end{table}

\textbf{Scalability in High-Dimensional Tasks.}  
To assess the scalability, we evaluate SAD-Flower on the D4RL Adroit Relocate task, a dexterous manipulation benchmark (39D state and 30D action) that requires high-dimensional control. We impose state constraints via the arm’s position for collision avoidance and action constraints through actuator limits. As shown in \cref{tab:dexterous_grasping_performance}, standard flow matching achieves moderate success but violates constraints due to unconstrained sampling. In contrast, our approach consistently satisfies both state and action constraints, demonstrating its ability to scale to high-dimensional robotic systems.

\begin{table}[t]
\centering
\caption{Effect of training dataset size on Hopper (Medium).}
\label{tab:model_accuracy}
\begin{large}
\resizebox{\columnwidth}{!}{
\begin{tabular}{
    >{\centering\arraybackslash}m{0.9cm}
    >{\centering\arraybackslash}m{2.2cm}
    >{\centering\arraybackslash}m{2.2cm}
    >{\centering\arraybackslash}m{2.2cm}
    >{\centering\arraybackslash}m{2.2cm}
}
\toprule
\textbf{Dataset}
& \textbf{safety}
& \textbf{admissib.}
& \textbf{dyn. consist.}
& \textbf{reward}\\
\midrule

90\%
& $0.00\pm0.00$
& $0.00\pm0.00$
& $0.01\pm0.02$
& $0.35\pm0.01$ \\
\cmidrule{1-5}

10\%
& $0.00\pm0.00$
& $0.00\pm0.00$
& $0.01\pm0.01$
& $0.38\pm0.03$ \\
\cmidrule{1-5}

1\%
& $0.00\pm0.00$
& $0.00\pm0.00$
& $0.01\pm0.01$
& $0.34\pm0.03$ \\
\cmidrule{1-5}

0.1\%
& $0.07\pm0.03$
& $0.00\pm0.00$
& $0.02\pm0.01$
& $0.37\pm0.04$ \\
\cmidrule{1-5}

0.01\%
& $0.02\pm0.09$
& $0.00\pm0.00$
& $0.04\pm0.02$
& $0.36\pm0.03$ \\
\bottomrule

\end{tabular}
}
\end{large}
\end{table}

\textbf{Sensitivity Analysis of Constraint-Enforcement Strength.\looseness=-1} 
We conduct an ablation study on constraint strength parameter $c$ to assess the sensitivity of SAD-Flower. While the main experiments use $c=0.5$, we vary it from 
$1.0$ to $0.4$. As shown in \cref{tab:maze2d_large_sensitivity}, SAD-Flower satisfies all constraints across all values. Violations of dynamic consistency remain negligible and stable, while task performance varies moderately. These results demonstrate that SAD-Flower achieves constraint satisfaction without requiring precise hyperparameter tuning, highlighting its practical reliability. \looseness=-1

\textbf{Robustness Under Imperfect Dynamics Models.} 
To evaluate robustness under model approximation error, we trained the forward model on progressively smaller subsets of the Hopper-Medium dataset. As shown in \cref{tab:model_accuracy}, SAD-Flower maintains full constraint satisfaction even with only $1\%$ of the original data, and dynamic consistency remains stable. Degradation becomes apparent only when the dataset is reduced to $0.1\%$, at which point safety violations emerge. These results show that SAD-Flower is tolerant to moderately inaccurate dynamics models, so training a sufficiently accurate model is practical, while constraint satisfaction can degrade when the learned dynamics model becomes highly inaccurate. We further confirm this trend through a Gaussian-noise ablation on Maze dynamics in Appendix \ref{append:additional_const}. These results characterize how model accuracy affects SAD-Flower: moderate errors can be tolerated, but severe model mismatch can degrade dynamic consistency and constraint satisfaction. \looseness=-1

\section{Conclusion}

We presented SAD-Flower, a control-augmented flow matching framework that ensures safe, admissible, and dynamically consistent trajectory planning. By reformulating flow matching as a controllable dynamical system with a virtual control input, our method leverages Control Barrier Function and Control Lyapunov Function conditions scheduled using prescribed-time control principles to enforce constraints at test time without retraining. Experiments across multiple tasks show that SAD-Flower achieves perfect constraint satisfaction, avoids local traps, and maintains competitive task performance. These results establish it as a practical, theoretically grounded solution for real-world deployment. Looking ahead, extending SAD-Flower to stochastic dynamics, complex constraint geometries, and online replanning offers promising directions for safe and reliable generative trajectory planning. \looseness=-1

\section*{Impact Statement}

This work introduces SAD-Flower, a control-augmented flow matching framework that enables generative models to plan trajectories with formal guarantees on safety, admissibility, and dynamic consistency. Our primary contribution is methodological, aiming to advance the field of safe and constraint-aware generative modeling in robotics and planning. The proposed framework has potential applications in domains where safety is critical, such as autonomous navigation, robotic manipulation, and assistive systems.

While our method improves constraint satisfaction and robustness at test time, it still inherits limitations common to data-driven approaches. In particular, discrepancies between the training data distribution and the deployment environment may lead the model to reproduce biased or suboptimal behaviors. In addition, although SAD-Flower is designed for planning rather than real-time control, further optimization of inference speed may be required before deployment in safety-critical systems with strict real-time requirements. We therefore recommend that applications in such domains be accompanied by rigorous validation, system-level testing, and appropriate human oversight.

We see this work as a step toward safer generative planning and believe it can contribute positively to automation in domains that currently rely on hand-crafted rule sets for constraint enforcement. However, as with many technologies in robotics and AI, broader deployment may impact labor markets and should be guided by thoughtful policy and ethical frameworks.

\section*{Acknowledgement}
We thank Ziyi Chen and Bryan Daniel Umbarila Rubiano for the simulation setup. We gratefully acknowledge the support of DAAD programme Konrad Zuse Schools of Excellence in Artificial Intelligence, sponsored by the Federal Ministry of Research, Technology and Space, European Union’s Horizon Europe innovation action programme
under grant agreement No. 101093822, “SeaClear2.0”, research training group METEOR DFG (GRK 3081) funded by German Research Foundation (DFG), Federal Ministry of Research, Technology, and Space (BMFTR) as part of the research program Communication Systems “Souverän. Digital. Vernetzt.”, joint project 6G-life under project identification number 16KIS2414, and a start-up grant of the National University of Singapore.
This work was supported in part by the National Science and Technology Council, Taiwan, under Grants 114-2628-E-002-021- and 115-2634-F-002 -012-, and the Taiwan Centers of Excellence in Artificial Intelligence. 
Shao-Hua Sun was supported by the Yushan Fellow Program of the Ministry of Education, Taiwan.

\bibliography{example_paper}
\bibliographystyle{icml2026}

%%%%%%%%%%%%%%%%%%%%%%%%%%%%%%%%%%%%%%%%%%%%%%%%%%%%%%%%%%%%%%%%%%%%%%%%%%%%%%%
%%%%%%%%%%%%%%%%%%%%%%%%%%%%%%%%%%%%%%%%%%%%%%%%%%%%%%%%%%%%%%%%%%%%%%%%%%%%%%%
% APPENDIX
%%%%%%%%%%%%%%%%%%%%%%%%%%%%%%%%%%%%%%%%%%%%%%%%%%%%%%%%%%%%%%%%%%%%%%%%%%%%%%%
%%%%%%%%%%%%%%%%%%%%%%%%%%%%%%%%%%%%%%%%%%%%%%%%%%%%%%%%%%%%%%%%%%%%%%%%%%%%%%%

\appendix
\onecolumn

\section*{Appendix}

% \begingroup
% \hypersetup{linkcolor=black}
% \hypersetup{pdfborder={0 0 0}}
% \part{} % Start the appendix part
% \parttoc % Insert the appendix TOC
% \endgroup

\section{Signed Distance Functions} \label{append:SDF}

Given sets $\mathbb{S}$ and $\mathbb{A}$, the signed distance functions are defined as the minimal distance to a point on the boundary $\partial\mathbb{S}$ and $\partial\mathbb{A}$ of the sets, with the sign indicating if the state/action along the trajectory is in the corresponding sets. This formally leads to the following functions
\begin{align}\label{eq:sdf}
    h^s_k(\bm{\tau})=\begin{cases}
        \min_{\bm{s}\in\partial\mathbb{S}} ||\bm{s}(k)\!-\!\bm{s}||&\text{if } \bm{s}\in\mathbb{S}\\
        -\min_{\bm{s}\in\partial\mathbb{S}} ||\bm{s}(k)\!-\!\bm{s}||&\text{else }
    \end{cases} &&
    h^a_k(\bm{\tau})=\begin{cases}
        \min_{\bm{a}\in\partial\mathbb{A}} ||\bm{a}(k)\!-\!\bm{a}||&\text{if } \bm{a}\in\mathbb{A}\\
        -\min_{\bm{a}\in\partial\mathbb{A}} ||\bm{a}(k)\!-\!\bm{a}||&\text{else. }
    \end{cases}
\end{align}

\section{Proofs of Theoretical Results} \label{append:proof}

\subsection{Feasibility of CBF Constraints} \label{append:proof_CBF}

\begin{theorem}\label{th:CBF}
    Assume that $\varphi(t)=\tfrac{c}{(t-1)^2}$ with constant $c\in\mathbb{R}_+$. If $h_k^s$ and $h_k^a$ are differentiable, then, for each $\bm{\tau}$, there exists a $\bm{u}$ jointly satisfying \eqref{eq:CBF-s_constraint} and \eqref{eq:CBF-a_constraint}.
\end{theorem}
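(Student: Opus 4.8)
The plan is to exploit the \emph{decoupled} structure of the barrier gradients. Since each state barrier $h^s_k$ is a signed distance function (\eqref{eq:sdf}) evaluated at the single coordinate block $\bm{s}(k)$, and each action barrier $h^a_k$ at the single block $\bm{a}(k)$, the gradient $\nabla h^s_k(\bm{\tau})$ is supported only on the $\bm{s}(k)$-coordinates and $\nabla h^a_k(\bm{\tau})$ only on the $\bm{a}(k)$-coordinates. As the blocks $\{\bm{s}(k)\}_{k=1}^{H-1}$ and $\{\bm{a}(k)\}_{k=0}^{H-1}$ are pairwise disjoint subsets of the $(n+m)H$ trajectory coordinates, the $2H-1$ inner products $\nabla^T h^{s/a}_k(\bm{\tau})\,\bm{u}$ each depend on a \emph{distinct} block of $\bm{u}$. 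Hence the constraints \eqref{eq:CBF-s_constraint} and \eqref{eq:CBF-a_constraint} can be treated one block at a time, and it suffices to show each individual constraint is solvable and then assemble the blockwise choices into a single $\bm{u}$ — which is consistent precisely because the supporting blocks are disjoint.

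Writing a generic constraint as $\nabla^T h(\bm{\tau})\left(\bm{v}_t^{\bm{\theta}}(\bm{\tau})+\bm{u}\right)\geq -\varphi(t)h(\bm{\tau})$ and letting $\bm{g}$, $\bm{v}$, $\bm{w}$ denote the restrictions of $\nabla h(\bm{\tau})$, $\bm{v}_t^{\bm{\theta}}(\bm{\tau})$, and $\bm{u}$ to the single coordinate block on which $\nabla h$ is supported, this reduces to the affine inequality $\bm{g}^T\bm{w}\geq b$ with $b:=-\varphi(t)h(\bm{\tau})-\bm{g}^T\bm{v}$ a finite scalar (finite since $t<1$ keeps $\varphi(t)$ finite; note the prescribed-time form of $\varphi$ is irrelevant here — only finiteness matters). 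First I would observe that whenever $\bm{g}\neq\bm{0}$ this inequality is trivially solvable: taking $\bm{w}=\lambda\bm{g}$ gives $\bm{g}^T\bm{w}=\lambda\|\bm{g}\|^2\to\infty$ as $\lambda\to\infty$, so any finite $b$ is eventually exceeded. Collecting these per-block solutions yields a control jointly satisfying every barrier constraint.

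The only remaining — and crux — step is to rule out $\bm{g}=\bm{0}$, since at a point where $h(\bm{\tau})<0$ a vanishing gradient would collapse the constraint to the infeasible $0\geq-\varphi(t)h(\bm{\tau})>0$. Here the signed-distance structure is essential: a signed distance function obeys the eikonal identity $\|\nabla h\|=1$ at every point of differentiability. Thus the hypothesis that $h^s_k$ and $h^a_k$ are differentiable at $\bm{\tau}$ is exactly what forces each relevant gradient block to be a unit, hence nonzero, vector, placing every constraint in the solvable case above. I expect this nonvanishing-gradient fact to be the main obstacle to state cleanly — one must either invoke the standard regularity theory of distance functions or verify it directly from \eqref{eq:sdf} through the minimizing boundary point — whereas the decoupling and the affine solvability are routine once it is in hand.
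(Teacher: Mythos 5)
Your proposal is correct and follows essentially the same route as the paper's proof: it uses the fact that the constraints act on independent (disjoint) coordinate blocks so that individual feasibility implies joint feasibility, and it invokes the signed-distance-function property $\|\nabla h_k^{s,a}(\bm{\tau})\|=1$ at points of differentiability to guarantee that $\nabla^T h_k^{s,a}(\bm{\tau})\,\bm{u}$ can be made arbitrarily large. Your write-up simply makes explicit the blockwise reduction to an affine inequality and the eikonal argument that the paper states in compressed form.
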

\begin{proof}
Since the constraints defined in eqs. (\ref{eq:CBF-s_constraint}) and (\ref{eq:CBF-a_constraint}) concern independent variables, individual feasibility of each constraint implies joint feasibility of all of them.  Due to the assumed differentiability of $h_k^{s,a}$ and the definition of SDFs implying $||\nabla h_k^{s,a}(\bm{\tau})||=1$, $\bm{u}$ can always be chosen such $\nabla^T h_k^{s,a}(\bm{\tau})\bm{u}$ takes an arbitrary value. Thus, the constraints defined in eqs. (\ref{eq:CBF-s_constraint}) and (\ref{eq:CBF-a_constraint}) are always feasible. 
\end{proof}

The assumption on the differentiability of $h_k^s$ and $h_k^a$ is required as the signed distance functions (SDF) \eqref{eq:sdf} are generally not smooth. However, they are differentiable almost everywhere for sets with smooth boundary under weak assumptions \citep{gilbarg1977elliptic}. 
Non-differentiable set boundaries can be overcome via smoothing transformations \cite{begzadic2025learning} or by increasing the number of CBF constraints via a decomposition of $\bar{\mathbb{S}}$ and $\bar{\mathbb{A}}$ into suitable subsets $\bar{\mathbb{S}}_i$ and $\bar{\mathbb{A}}_i$, such that the SDFs can be computed with respect to the subsets instead. For example, this approach immediately yields linear functions of the form $h_{k,i}^a(\bm{\tau})=\bar{a}\pm\bm{a}_i(k)$ for individual elements $\bm{a}_i(k)$ of actions when defining suitable subsets $\bar{\mathbb{A}}_i$ for the the commonly employed box constraints $||\bm{a}_i(k)||_{\infty}\leq \bar{a}$ with some constant $a\in\mathbb{R}_+$. Thus the differentiability assumption in \cref{th:CBF} is rather technical and often not relevant in practice for the usage of SDFs in CBF constraints \citep{long2021learning}.  \looseness=-1

\subsection{Feasibility of CLF Constraints}

\begin{theorem}\label{th:CLF}
    Assume that $\varphi(t)=\tfrac{c}{(t-1)^2}$ with constant $c\in\mathbb{R}_+$. If $\{(\bm{s},\bm{a}): \mathrm{rank}(\tfrac{\partial}{\partial \bm{s}}\bm{f}(\bm{s},\bm{a}))\neq n \wedge \mathrm{rank}(\tfrac{\partial}{\partial \bm{a}}\bm{f}(\bm{s},\bm{a}))\neq m \}=\emptyset$, then, for each $\bm{\tau}$, there exists a $\bm{u}$ satisfying \eqref{eq:CLF_constraint}.
\end{theorem}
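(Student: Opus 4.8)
The plan is to recognize that \eqref{eq:CLF_constraint} is, for a fixed $\bm{\tau}$ and $t$, a single affine inequality in the decision variable $\bm{u}$. Writing $\dot V = \nabla^{T} V(\bm{\tau})\,(\bm{v}_t^{\bm{\theta}}(\bm{\tau}) + \bm{u})$, the constraint reads $\nabla^{T} V(\bm{\tau})\,\bm{u} \le -\varphi(t) V(\bm{\tau}) - \nabla^{T} V(\bm{\tau})\,\bm{v}_t^{\bm{\theta}}(\bm{\tau})$. A scalar inequality $\bm{g}^{T}\bm{u} \le b$ is feasible whenever $\bm{g} \ne \bm{0}$ (take $\bm{u}$ antiparallel to $\bm{g}$ with large magnitude), and when $\bm{g} = \bm{0}$ it is feasible iff $b \ge 0$. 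Here $\bm{g} = \nabla V(\bm{\tau})$, and if $\nabla V(\bm{\tau}) = \bm{0}$ the right-hand side collapses to $-\varphi(t) V(\bm{\tau})$. Since $\varphi(t) > 0$ and $V \ge 0$ by construction, this is nonnegative exactly when $V(\bm{\tau}) = 0$. Hence the whole theorem reduces to one implication: if $V(\bm{\tau}) > 0$ then $\nabla V(\bm{\tau}) \ne \bm{0}$, i.e. $V$ has no critical point away from its global minimum $0$.

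First I would expand this gradient componentwise. Abbreviate the per-step consistency errors $\bm{e}_k = \bm{s}(k) - \bm{f}(\bm{s}(k-1),\bm{a}(k-1))$ and the Jacobians $\bm{A}_k = \tfrac{\partial}{\partial \bm{s}}\bm{f}(\bm{s}(k),\bm{a}(k))$, $\bm{B}_k = \tfrac{\partial}{\partial \bm{a}}\bm{f}(\bm{s}(k),\bm{a}(k))$. Each action $\bm{a}(k-1)$ enters $V$ only through $\bm{e}_k$, so $\partial V/\partial \bm{a}(k-1) = -\bm{B}_{k-1}^{T}\bm{e}_k$; each interior state $\bm{s}(k)$ enters through both $\bm{e}_k$ (as output) and $\bm{e}_{k+1}$ (as input), giving $\partial V/\partial \bm{s}(k) = \bm{e}_k - \bm{A}_k^{T}\bm{e}_{k+1}$ with the convention $\bm{e}_H = \bm{0}$. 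Now suppose $V(\bm{\tau}) > 0$, so some error is nonzero. The hypothesis is exploited cleanly by picking the \emph{smallest} index $j$ with $\bm{e}_j \ne \bm{0}$, whence $\bm{e}_{j-1} = \bm{0}$ and the two relevant components reduce to the single-term expressions $\partial V/\partial \bm{a}(j-1) = -\bm{B}_{j-1}^{T}\bm{e}_j$ and $\partial V/\partial \bm{s}(j-1) = -\bm{A}_{j-1}^{T}\bm{e}_j$. The emptiness hypothesis $\{(\bm{s},\bm{a}): \det \bm{A} = 0 \wedge \det \bm{B} = 0\} = \emptyset$ guarantees that at $(\bm{s}(j-1),\bm{a}(j-1))$ at least one of $\bm{A}_{j-1}$, $\bm{B}_{j-1}$ is nonsingular; its transpose is then injective, hence maps $\bm{e}_j \ne \bm{0}$ to a nonzero vector, forcing the corresponding component—and therefore $\nabla V(\bm{\tau})$—to be nonzero. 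This establishes the implication and thus feasibility.

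The main obstacle is the bookkeeping created by each interior state coupling two consecutive error terms, so a naive index choice yields the two-term expression $\bm{e}_{j-1} - \bm{A}_{j-1}^{T}\bm{e}_j$ in which cancellation could in principle hide the gradient; selecting the smallest (or, symmetrically, the largest) nonzero-error index is exactly what kills one term and isolates a clean linear image of $\bm{e}_j$. A second subtlety is the boundary: the initial state $\bm{s}(0)$ is held fixed (per the Remark), so $\partial V/\partial \bm{s}(0)$ is not a usable control direction, which is precisely the edge case $j=1$. I would close this gap by the terminal route, which in fact yields the conclusion unconditionally: taking $j$ to be the \emph{largest} nonzero-error index makes $\bm{e}_{j+1} = \bm{0}$, so $\partial V/\partial \bm{s}(j) = \bm{e}_j \ne \bm{0}$ directly. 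This both confirms feasibility in every case and clarifies that the Jacobian hypothesis, while convenient for the action-based argument, is sufficient rather than strictly necessary. Finally, as in \cref{th:CBF}, I would record that differentiability of $\bm{f}$ (hence of $V$) is what makes $\nabla V$ well defined throughout.
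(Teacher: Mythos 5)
Your proof is correct and follows the same overall strategy as the paper's---reduce feasibility of the scalar affine constraint \eqref{eq:CLF_constraint} to showing $\nabla V(\bm{\tau})\neq\bm{0}$ whenever $V(\bm{\tau})>0$, then inspect the block components of $\nabla V$---but the execution of the key step differs, and yours is tighter in two respects. The paper picks an arbitrary index $i$ with $\bm{\ell}_i\neq\bm{0}$ and runs a downward recursion through the coupled two-term components $\bm{\ell}_{k-1}-\tfrac{\partial\bm{f}}{\partial\bm{s}}^{T}\bm{\ell}_{k}$ all the way to index $0$, invoking a rank condition at every step; you instead choose an \emph{extremal} nonzero-error index, which annihilates one of the two terms immediately and finishes in a single step. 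Your terminal variant ($j$ the largest index with $\bm{e}_j\neq\bm{0}$, so $\partial V/\partial\bm{s}(j)=\bm{e}_j\neq\bm{0}$) is a genuine strengthening: it shows the Jacobian hypothesis is sufficient but not necessary, a fact the paper's own gradient expression (whose last block is the bare $\bm{\ell}_{H-1}$) already contains but does not exploit. You are also more careful on two points the paper glosses over: (i) the paper's recursion only invokes $\mathrm{rank}\neq 0$ of the relevant Jacobian, which by itself does not rule out $\bm{\ell}_i$ lying in the kernel of its transpose---your appeal to nonsingularity and hence injectivity of the transpose, which is what the determinant hypothesis actually supplies, is the correct reading; and (ii) you flag that $\partial V/\partial\bm{s}(0)$ is not a usable control direction when the initial state is pinned per the Remark, an edge case the paper ignores and which your terminal route sidesteps entirely.
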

\begin{proof}
Let
\begin{align}
    \bm{\ell}_k(\tau)=\left(\bm{\tau}^{\bm{s}(k+1)}-\bm{f}(\bm{\tau}^{\bm{s}(k)},\bm{\tau}^{\bm{a}(k)})\right).
\end{align}
Then, the gradient of $V$ is given by
\begin{align}
    \nabla V(\bm{\tau})=\begin{bmatrix}
        -\frac{\partial \bm{f}(\bm{\tau}^{\bm{s}(0)},\bm{\tau}^{\bm{a}(0)})}{\partial \bm{\tau}^{\bm{s}(0)}}  \bm{\ell}_0(\bm{\tau})\\
        -\frac{\partial \bm{f}(\bm{\tau}^{\bm{s}(0)},\bm{\tau}^{\bm{a}(0)})}{\partial \bm{\tau}^{\bm{a}(0)}}  \bm{\ell}_0(\bm{\tau})\\
        \bm{\ell}_0(\bm{\tau})
        -\frac{\partial \bm{f}(\bm{\tau}^{\bm{s}(1)},\bm{\tau}^{\bm{a}(1)})}{\partial \bm{\tau}^{\bm{s}(1)}} \bm{\ell}_1(\bm{\tau})\\
        -\frac{\partial \bm{f}(\bm{\tau}^{\bm{s}(1)},\bm{\tau}^{\bm{a}(1)})}{\partial \bm{\tau}^{\bm{a}(1)}} \bm{\ell}_1(\bm{\tau})\\
        \vdots\\
        \bm{\ell}_{H-2}(\bm{\tau})
        -\frac{\partial \bm{f}(\bm{\tau}^{\bm{s}(H-1)},\bm{\tau}^{\bm{a}(H-1)})}{\partial \bm{\tau}^{\bm{s}(H-1)}} \bm{\ell}_{H-1}(\bm{\tau})\\
        -\frac{\partial \bm{f}(\bm{\tau}^{\bm{s}(H-1)},\bm{\tau}^{\bm{a}(H-1)})}{\partial \bm{\tau}^{\bm{a}(H-1)}} \bm{\ell}_{H-1}(\bm{\tau})\\
        \bm{\ell}_{H-1}(\bm{\tau})\\
        \bm{0}
    \end{bmatrix}.
\end{align}
In the following, we will show by contradiction that $\nabla V(\bm{\tau})=\bm{0}$ if and only if $\bm{\ell}_k(\bm{\tau})=\bm{0}$ for all $k=0,\ldots, H-1$. For this purpose, assume that $\nabla V(\bm{\tau})=\bm{0}$ and $\bm{\ell}_i\neq\bm{0}$ for some $i=1,\ldots, H-1$. If $\mathrm{rank}(\tfrac{\partial \bm{f}(\bm{\tau}^{\bm{s}(i)},\bm{\tau}^{\bm{a}(i)})}{\partial \bm{\tau}^{\bm{a}(i)}})\neq m$, $\nabla V(\bm{\tau})\neq\bm{0}$ is trivially contradicted. If $\mathrm{rank}(\tfrac{\partial \bm{f}(\bm{\tau}^{\bm{s}(i)},\bm{\tau}^{\bm{a}(i)})}{\partial \bm{\tau}^{\bm{s}(i)}})\neq n$, $\nabla V(\bm{\tau})=\bm{0}$ requires $\bm{\ell}_{i-1}\neq \bm{0}$. Hence, we can consider the same two cases as for $\bm{\ell}_i\neq\bm{0}$. By repeating this procedure and always considering the case $\mathrm{rank}(\tfrac{\partial \bm{f}(\bm{\tau}^{\bm{s}(k)},\bm{\tau}^{\bm{a}(k)})}{\partial \bm{\tau}^{\bm{s}(k)}})\neq n$, we eventually end up with the condition
\begin{align}
    \frac{\partial \bm{f}(\bm{\tau}^{\bm{s}(0)},\bm{\tau}^{\bm{a}(0)})}{\partial \bm{\tau}^{\bm{s}(0)}}  \bm{\ell}_0(\bm{\tau})=\bm{0}&&\wedge&& \mathrm{rank}(\tfrac{\partial \bm{f}(\bm{\tau}^{\bm{s}(0)},\bm{\tau}^{\bm{a}(0)})}{\partial \bm{\tau}^{\bm{a}(0)}})\neq n &&\wedge&& \bm{\ell}_0\neq\bm{0},
\end{align}
which cannot be satisfied. Thus, $\nabla V(\bm{\tau})\neq\bm{0}$ holds if $\bm{\ell}_k(\bm{\tau})\neq\bm{0}$ for some $k$. Consequently, there always exists a $\bm{u}$ such that \eqref{eq:CLF_constraint} is satisfied, rendering the constraint feasible. 
\end{proof}

To ensure the feasibility of CLF constraints, we again need one technical assumption: Through infinitesimal changes of states or actions, the dynamics $\bm{f}$ can be changed in arbitrary directions. If this property is not satisfied, $\nabla V(\bm{\tau})$ does not necessarily provide information about directions for reducing the violation of \eqref{prob:consis}. 
Note that matrices with rank deficiency have zero measure among all matrices, such that infeasibilities related to a violation of the rank condition in \cref{th:CLF} occur only at isolated states (except for special cases of dynamics, \eg piecewise constant dynamics). Thus, the rank condition is usually satisfied almost everywhere for many relevant systems, which is sufficient for the feasibility of the CLF constraint in \eqref{eq:CLF_constraint} in practice.

\subsection{Proof of \cref{th:CLF-CBF-All}}

\begin{proof}[Proof of \Cref{th:CLF-CBF-All}]

Due to the assumed feasibility of constraints, \eqref{eq:control_ode} controlled by \eqref{eq:CBF_CLF_control_law} satisfies 
\eqref{eq:CBF-s_constraint}, \eqref{eq:CBF-a_constraint}, and \eqref{eq:CLF_constraint}, i.e., prescribed-time CBF and CLF conditions are satisfied. 
As $h_k^{s,a}$ is positive inside $\mathbb{S}/\mathbb{A}$, negative outside, and zero on the boundaries of these sets, it corresponds to a prescribed-time control barrier function \citep{PTCBF_TY}. Therefore, it immediately follows from \citep[Theorem 1]{PTCBF_TY} that $h_k^{s,a}(\bm{\tau}_1)>0$ at $t=1$, which implies satisfaction of safety \eqref{prob:safety} and admissibility \eqref{prob:admiss} by construction.
Since $V$ is positive definite, it corresponds to a prescribed-time control Lyapunov function \citep{PTCLF_song}. Hence, it immediately follows from \citep[Theorem 2]{SONG2017243} that $\bm{V}(\bm{\tau}_1)=0$, which implies satisfaction of \eqref{prob:consis} by construction.
\end{proof}

While the two constraints in \eqref{eq:CBF-s_constraint} and \eqref{eq:CBF-a_constraint} concern independent variables, the addition of constraint \eqref{eq:CLF_constraint} introduces a coupling between the constraints. This coupling can cause infeasibility at some trajectories $\bm{\tau}$ in general, but technical conditions exist that exclude them \citep{Wang2024relaxed}. Moreover, these infeasibilities are not an issue from a practical perspective since they often affect isolated $\bm{\tau}$ or small subsets of trajectories usually not occurring while numerically integrating \eqref{eq:control_ode}.

\subsection{Learning Error of the Dynamics Model} \label{append:learned_dynamic_error}

When the true system dynamics $\bm{f}^*$ are unknown, SAD-Flower relies on a learned forward model $\bm{f}$ trained from data. This inevitably introduces approximation error, which affects the predicted next states during sampling. To quantify the deviation between the learned and true dynamics, we begin with the following result.

\begin{lemma}[State Deviation Under Model Approximation]
If the forward model $\bm{f}$ has Lipschitz constant $L_f$, and the pointwise approximation error to the true dynamics $\bm{f}^*$ is bounded as
\begin{equation}
    ||\bm{f}^*(\bm{s}(k), \bm{a}(k))-\bm{f}(\bm{s}(k), \bm{a}(k))|| \leq \zeta
\end{equation}
Then the deviation between the true trajectory $\bm{s}^*(k)$ and the learned trajectory $\bm{s}(k)$ after $k$ steps satisfies
\begin{equation}
    ||\bm{s}^*(k) - \bm{s}(k)|| \leq \zeta \sum_{i=0}^{k-1} L_f^i =: \xi.
\end{equation}
\end{lemma}
\begin{proof}
Let the true and learned states evolve as:
\[
    \bm{s}^*(k) = \bm{f}^*(\bm{s}^*(k-1), \bm{a}(k-1)), \quad \bm{s}(k) = \bm{f}(\bm{s}(k-1), \bm{a}(k-1)).
\]
Then,
\begin{align} \label{eq:state_bound}
    ||\bm{s}^*(k)-\bm{s}(k)|| &= ||\bm{f}^*(\bm{s}^*(k-1), \bm{a}(k-1))-\bm{f}(\bm{s}(k-1), \bm{a}(k-1))|| \nonumber \\
    &\leq ||\bm{f}^*(\bm{s}^*(k-1), \bm{a}(k-1)) - \bm{f}(\bm{s}^*(k-1), \bm{a}(k-1))|| \nonumber\\
    &+||\bm{f}(\bm{s}^*(k-1), \bm{a}(k-1)) - \bm{f}(\bm{s}(k-1), \bm{a}(k-1))|| \nonumber\\
    &\leq \zeta + L_f||\bm{s}^*(k-1)-\bm{s}(k-1)|| \nonumber\\
    &\leq \xi,
\end{align}
Unrolling this recursion yields the desired bound.
\end{proof}

This result gives an explicit upper bound $\xi$ on the deviation between the true and learned trajectories. The Lipschitz constant of the learned forward model $L_f$ and the error bound of forward dynamics $\zeta$ can be obtained with several techniques, \eg automatic differentiation algorithms \citep{virmaux2018lipschitz}, convex optimization \citep{fazlyab2019efficient}, Gaussian process \citep{lederer2019uniform}. In principle, this bound can be leveraged to ensure safety and admissibility under model approximation error by adopting a robust Control Barrier Function (RCBF) formulation \citep{buch2021robust}, which is applied in our implementation. Specifically, the CBF condition \eqref{eq:CBF-s_constraint} can be modified as
\begin{equation}
    \dot{h}^s_k(\bm{\tau}_t) \geq -\varphi(t) h^s_k(\bm{\tau}_t)-(\varphi(t)L_h+||\bm{u}_t||L_{\nabla h})\xi, \quad \forall k=1,\ldots, H-1, \tag{RCBF-s}\label{eq:CBF-s_robust}
\end{equation}
where $L_h$ and $L_{\nabla h}$ are Lipschitz constants of $h^s_k(\bm{\tau}_t)$ and its gradient, respectively.

Under this robust formulation, the corresponding control input for $t \ge T_0$ would be obtained by solving
\begin{align}\label{eq:RCBF_CLF_control_law}
    \bm{u}_t=&\min\nolimits_{\bm{u}}||\bm{u}||^2
    &&\text{s.t. eqs. (\ref{eq:CBF-s_robust}), (\ref{eq:CBF-a_constraint}) and  (\ref{eq:CLF_constraint})  hold},
\end{align}
If such a controller were applied (with $\bm{u}_t = 0$ for $t < T_0$), the trajectory $\bm{\tau}^*_t$ generated by the \emph{true} dynamics satisfies the original safety and admissibility constraints \eqref{prob:safety} and \eqref{prob:admiss}. Although dynamic consistency cannot be formally guaranteed without access to the true system $\bm{f}^*$, the robust CBF formulation provides a principled method for enforcing constraint satisfaction in the presence of model approximation error. Introducing norm-dependent terms in \eqref{eq:CBF-s_robust} transforms the QP into a second-order cone program (SOCP), which may incur slightly higher computational cost. Nevertheless, the problem remains efficiently solvable in practice and offers strong guarantees in return.

An alternative strategy for addressing model approximation error is to apply conservative constraint tightening—shrinking the constraint sets by a safety margin to account for prediction uncertainty, as in \citet{romer2024diffusion}. While this approach is often simpler to implement than robust CBFs, our experiments showed that constraint satisfaction remained reliable without additional tightening. Nonetheless, both constraint tightening and robust CBFs remain viable options, particularly in scenarios with higher model uncertainty.

\section{Additional Experiment Details and Results} \label{append:additional_const}
\paragraph{Generalization Across Datasets. } We evaluate SAD-Flower on locomotion tasks using flow models trained on different datasets in the Hopper and Walker2d environments (\eg the Medium dataset). As shown in \cref{tab:main_result_loco_medium}, our method consistently ensures perfect satisfaction of both safety and admissibility constraints, regardless of the dataset used to train the underlying flow model. These results demonstrate the generalization ability of SAD-Flower when applied to different pre-trained generative planners.

\paragraph{Robustness to Stricter Test-Time Constraints. } We evaluate the robustness of SAD-Flower by tightening the test-time constraint on torso height (\cref{fig:loco_diff_height}) in the Hopper environment, decreasing the upper bound from 1.6 to 1.4 and 1.25 settings not encountered during training. As shown in \cref{tab:abla_tighten_constraint}, our method maintains perfect satisfaction of both safety and admissibility constraints across all levels of difficulty, despite the increasingly restrictive conditions. This demonstrates SAD-Flower’s strong generalization to stricter, unseen constraints at test time. While task rewards decrease modestly—as expected due to the heightened challenge—our method consistently preserves formal guarantees of constraint satisfaction, validating the effectiveness of prescribed-time control as a flexible enforcement mechanism even under distributional shifts.

\begin{table*}[!t] 
\caption{Performance of the proposed SAD-Flower and baselines for locomotion tasks with medium-expert dataset. The methods are compared on the maximum safety and admissibility constraint violations of planned trajectories, the magnitude of dynamic consistency violation, and the model accuracy expressed through the reward.}
\label{tab:main_result_loco_medium}
\vspace{-0.1cm}
\centering 

\resizebox{\textwidth}{!}{ 
\begin{tabular}{
>{\centering\arraybackslash}m{2.0cm}
>{\centering\arraybackslash}m{1.8cm}
>{\centering\arraybackslash}m{1.55cm}
>{\centering\arraybackslash}m{1.55cm}
>{\centering\arraybackslash}m{1.55cm}
>{\centering\arraybackslash}m{1.55cm}
>{\centering\arraybackslash}m{1.55cm}
>{\centering\arraybackslash}m{1.61cm}
>{\centering\arraybackslash}m{1.55cm}
}
\toprule
\textbf{Experiment} & \textbf{Metric} & \textbf{Diffuser} & \textbf{Trunc} & \textbf{CG} & \textbf{FM} & \textbf{S-Diffuser} & \textbf{D-Diffuser} & \textbf{Ours} \\
\midrule
\multirow{5}{2cm}{\centering Hopper (Medium)}
& safety & 0.01$\pm$0.02 & 0.05$\pm$0.03 & \textbf{0.00}$\bm{\pm}$\textbf{0.00} & 0.39$\pm0.13$ & 0.01$\pm$0.01 & 0.15$\pm$0.01& \textbf{0.00}$\bm{\pm}$\textbf{0.00}\\
\cmidrule{2-9}
& admissib. & 0.21$\pm$0.05 & 0.18$\pm$0.04& 0.16$\pm$0.03 &0.32$\pm0.21$ & 0.18$\pm$0.04 &\textbf{0.00}$\bm{\pm}$\textbf{0.00}& \textbf{0.00}$\bm{\pm}$\textbf{0.00}\\
\cmidrule{2-9}
& dyn. consist. & 0.46$\pm$0.01 & 0.47$\pm$0.01 & 0.95$\pm$0.02 & 0.42$\pm$0.39 & 0.47$\pm$0.01 & 0.18$\pm$0.01 & \textbf{0.01}$\bm{\pm}$\textbf{0.01}\\
\cmidrule{2-9}
& reward & 0.44$\pm$0.05 & 0.45$\pm$0.06 & 0.39$\pm$0.03 & \textbf{0.49}$\bm{\pm}$\textbf{0.05} &0.45$\pm$0.06 & 0.48$\pm$0.08 & 0.34$\pm$0.03 \\
\midrule %---------------------------
% (Walker2d (Medium) data...)
\multirow{5}{2cm}{\centering Walker2d (Medium)}
& safety & 0.03$\pm$0.03 & 0.02$\pm$0.01& 0.02$\pm$0.02 & 0.21$\pm$0.15 & 0.02$\pm$0.02 & 0.09$\pm$0.04& \textbf{0.00}$\bm{\pm}$\textbf{0.00}\\
\cmidrule{2-9}
& admissib. & 0.56$\pm$0.10 & 0.44$\pm$0.19& 0.54$\pm$0.14& 0.48$\pm$0.06 & 0.52$\pm$0.12 & \textbf{0.00}$\bm{\pm}$\textbf{0.00}& \textbf{0.00}$\bm{\pm}$\textbf{0.00}\\
\cmidrule{2-9}
& dyn. consist. & 0.68$\pm$0.08 & 0.65$\pm$0.08 & 0.72$\pm$0.38 & 0.40$\pm$0.06 & 0.64$\pm$0.07 & 1.52$\pm$0.05 & \textbf{0.07}$\bm{\pm}$\textbf{0.15}\\
\cmidrule{2-9}
& reward & 0.57$\pm$0.26 & 0.50$\pm$0.26 & 0.55$\pm$0.28 & 0.73$\pm$0.15 & 0.49$\pm$0.23 & \textbf{0.76}$\bm{\pm}$\textbf{0.16} & 0.42$\pm$0.23 \\
\bottomrule
\end{tabular}
}
\vspace{-0.2cm}
\end{table*}
\begin{table*}[h]
\caption{Performance of the proposed SAD-Flower and baselines depending on constraint tightness. Constraints for the Hopper are tightened via lower admissible heights.}
\label{tab:abla_tighten_constraint}
\vspace{0.1cm}
\resizebox{\textwidth}{!}{ 
\begin{tabular}{
    >{\centering\arraybackslash}m{2.0cm} 
    >{\centering\arraybackslash}m{1.8cm} 
    >{\centering\arraybackslash}m{1.55cm} 
    >{\centering\arraybackslash}m{1.55cm}
    >{\centering\arraybackslash}m{1.55cm} 
    >{\centering\arraybackslash}m{1.55cm} 
    >{\centering\arraybackslash}m{1.55cm} 
    >{\centering\arraybackslash}m{1.61cm}
    >{\centering\arraybackslash}m{1.55cm} 
}
\toprule
\textbf{Experiment}  & \textbf{Metric} & \textbf{Diffuser} & \textbf{Truncate} & \textbf{CG}  & \textbf{FM} & \textbf{S-Diffuser} & \textbf{D-Diffuser} & \textbf{Ours} \\
\midrule
\multirow{5}{2cm}{\centering Hopper (height=1.6)} 
& safety & 0.01$\pm$0.02 & 0.05$\pm$0.04 & 0.07$\pm$0.03  & 0.11$\pm0.08$ & 0.05$\pm$0.04 & 0.10$\pm$0.02& \textbf{0.00}$\bm{\pm}$\textbf{0.00}\\
\cmidrule{2-9}
& admissib. & 0.21$\pm$0.05 & 0.18$\pm$0.04 & 0.26$\pm$0.07  & 0.17$\pm0.05$& 0.18$\pm$0.04 &\textbf{0.00}$\bm{\pm}$\textbf{0.00}&  \textbf{0.00}$\bm{\pm}$\textbf{0.00}\\
\cmidrule{2-9}
& dyn. consist. & 0.38$\pm$0.03 & 0.41$\pm$0.04 & 0.79$\pm$0.10 & 0.23$\pm$0.02 & 0.36$\pm$0.06 & 0.16$\pm$0.01 & \textbf{0.01}$\bm{\pm} \textbf{0.01}$ \\
\cmidrule{2-9}
& reward & 1.06$\pm$0.18 & 0.50$\pm$0.12 & 0.73$\pm$0.022 & 1.02$\pm$0.20 & 0.53$\pm$0.19 & \textbf{1.12}$\bm{\pm} \textbf{0.01}$& 0.93$\pm$0.23\\
\midrule %---------------------------
%---------------------------
\multirow{5}{2cm}{\centering Hopper (height=1.4)} 
& safety & 0.28$\pm$0.08 & 0.12$\pm$0.09 & 0.04$\pm$0.03 &1.24$\pm$0.07& 0.12$\pm$0.09 & 0.22$\pm$0.03& \textbf{0.00}$\bm{\pm}$\textbf{0.00} \\
\cmidrule{2-9}
& admissib. & 0.21$\pm$0.05 & 0.22$\pm$0.09 & 1.79$\pm$0.33 & 0.19$\pm$0.10& 0.20$\pm$0.08& \textbf{0.00}$\bm{\pm}$\textbf{0.00}& \textbf{0.00}$\bm{\pm}$\textbf{0.00} \\
\cmidrule{2-9}
& dyn. consist. & 0.38$\pm$0.03 & 0.40$\pm$0.08 & 1.06$\pm$0.03 & 0.01$\pm$0.01 & 0.44$\pm$0.06 & 0.13$\pm$0.01 & \textbf{0.01}$ \bm{\pm} \textbf{0.01}$ \\
\cmidrule{2-9}
& reward & 1.06$\pm$0.18 &0.39$\pm$0.17 & 0.32$\pm$0.15 & 1.03$\pm$0.17 & 0.35$\pm$0.10 & \textbf{1.11}$\bm{\pm}$\textbf{0.02} & 0.26$\pm$0.02 \\
\midrule %---------------------------
%---------------------------
\multirow{5}{2cm}{\centering Hopper (height=1.25)}
& safety &0.40$\pm$0.04 & \textbf{0.00}$\bm{\pm}$\textbf{0.00} & 0.01$\pm$0.01  & 2.10$\pm$0.07 & \textbf{0.00}$\bm{\pm}$\textbf{0.00} & 0.37$\pm$0.02& \textbf{0.00}$\bm{\pm}$\textbf{0.00}\\
\cmidrule{2-9}
& admissib. & 0.21$\pm$0.05 & 0.30$\pm$0.05 & 1.38$\pm$0.27 & 0.05$\pm$0.10 & 0.30$\pm$0.05& \textbf{0.00}$\bm{\pm}$\textbf{0.00}& \textbf{0.00}$\bm{\pm}$\textbf{0.00} \\
\cmidrule{2-9}
& dyn. consist. & 0.38$\pm$0.03 & 0.46$\pm$0.21 & 1.79$\pm$0.08 & \textbf{0.01}$\bm{\pm} \textbf{0.01}$& 0.46$\pm$0.21 & 0.15$\pm$0.01 & \textbf{0.01}$\bm{\pm} \textbf{0.01}$\\
\cmidrule{2-9}
& reward & 1.06$\pm$0.18 & 0.13$\pm$0.02 &0.10$\pm$0.05  & \textbf{1.02}$\bm{\pm} \textbf{0.16}$&0.13$\pm$0.02 & 0.78$\pm$0.02 & 0.17$\pm$0.00 \\
\bottomrule
\end{tabular}
}
\end{table*}
\begin{figure}[!htb]
	\centering
	\includegraphics[width=0.23\textwidth]{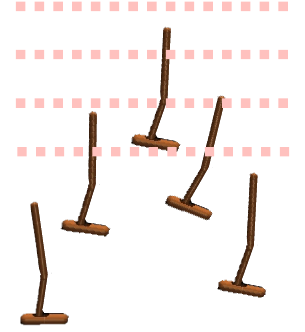}
	\caption{Performance under progressively tightened constraints in the locomotion task. As the allowed torso height decreases, the admissibility constraint becomes stricter, creating a conflict with the task objective of jumping forward.}
	\label{fig:loco_diff_height}
\end{figure}

\paragraph{Comparison with DPCC \citep{romer2024diffusion}.} To provide a more comprehensive comparison with optimization-based generative planning, we evaluate SAD-Flower against three DPCC variants \citep{romer2024diffusion}: DPCC-T, which selects samples with minimal deviation from the previous timestep; DPCC-C, which minimizes cumulative projection cost; and DPCC-R, which selects samples randomly after projection. All DPCC variants leverage full-system nonlinear optimization at each denoising step to enforce constraints. As shown in \cref{tab:DPCC_compare}, while all DPCC variants succeed in satisfying admissibility and achieving dynamic consistency, they exhibit consistently higher safety violations (up to 0.02) even when we apply the constraint tightening technique (using height=1.5 under the constraint with height=1.6) and incur significantly greater computational cost—up to 70× slower than SAD-Flower. These issues likely stem from performing optimization over early-stage noisy samples, which may be difficult or even infeasible to correct without distorting future structure. Furthermore, the lower planning reward across all DPCC variants (0.61–0.65 vs. 0.93) suggests that early and aggressive control leads to sample deviation from the distribution learned by the generative model. In contrast, SAD-Flower activates control later using a lightweight QP-based formulation and preserves both safety and performance, highlighting the advantages of flexible, prescribed-time constraint enforcement.

\begin{table}[!t] 
\centering 
\caption{Performance of SAD-Flower and all baselines in a dexterous grasping scenario (Adroit-Hand for Relocate tasks).} 
\label{tab:dexterous_grasping_performance_complete}
\begin{small}
\resizebox{0.7\columnwidth}{!}{ 
\begin{tabular}{
>{\centering\arraybackslash}m{2.0cm}
>{\centering\arraybackslash}m{1.8cm}
>{\centering\arraybackslash}m{1.8cm}
>{\centering\arraybackslash}m{2.2cm}
>{\centering\arraybackslash}m{1.8cm}
}
\toprule
\textbf{Methods} & safety & admissib. & dyn. consist. & reward \\
\midrule
%---------------------------
\textbf{Diffuser} & 2.53$\pm$0.78& 0.32$\pm$0.07 & 0.09$\pm$0.04 & 1.04$\pm$0.07 \\
\cmidrule{1-5}
\textbf{Trunc} & \textbf{0.00}$\bm{\pm}$\textbf{0.00} & 0.32$\pm$0.07 & 0.11$\pm$0.03 & 1.04$\pm$0.06 \\
\cmidrule{1-5}
\textbf{CG} & 2.51$\pm$0.77& 0.31$\pm$0.07 & 0.15$\pm$0.06 & 1.03$\pm$0.06 \\
\cmidrule{1-5}
\textbf{S-Diffuser} & \textbf{0.00}$\bm{\pm}$\textbf{0.00} & 0.30$\pm$0.06 & 0.13$\pm$0.05 & 1.01$\pm$0.06 \\
\cmidrule{1-5}
\textbf{D-Diffuser} & 2.21$\pm$0.47& 0.23$\pm$0.06 & 0.23$\pm$0.08 & 1.06$\pm$0.07 \\
\cmidrule{1-5}
\textbf{FM} & 0.15$\pm$0.21& 0.62$\bm{\pm}$0.19 & 0.07$\bm{\pm}$0.04 & \textbf{1.07}$\bm{\pm}$\textbf{0.08} \\
\cmidrule{1-5}
\textbf{Ours} & \textbf{0.00}$\bm{\pm}$\textbf{0.00} & \textbf{0.00}$\bm{\pm}$\textbf{0.00} & \textbf{0.06}$\bm{\pm}$\textbf{0.09} & 1.05$\pm$0.23 \\
\bottomrule
\end{tabular}
}
\end{small}

\end{table}

\paragraph{Performance of Adroit Task Across All Baselines}
In addition to the comparison with standard flow matching in the main text, we evaluate SAD-Flower against all baselines on the D4RL Adroit Relocate task. This benchmark involves a 39-dimensional state space and a 30-dimensional action space, requiring high-dimensional control under contact-rich dynamics. As shown in \cref{tab:dexterous_grasping_performance_complete}, SAD-Flower is the only method that simultaneously satisfies safety and admissibility constraints while maintaining low dynamic-consistency violation. Other baselines either violate state or action constraints, or produce trajectories with larger dynamic inconsistency. These results further confirm that SAD-Flower scales to challenging dexterous manipulation settings while retaining reliable constraint enforcement.

\paragraph{Effect of Injected Dynamics Noise}
\label{append:noise_dynamics}

To further study the effect of dynamics-model error, we inject Gaussian noise into the learned Maze dynamics and evaluate SAD-Flower under increasing noise levels. This perturbation serves as a proxy for poor model conditions, such as discontinuities, unmodeled effects, or large prediction errors. As shown in \cref{tab:inject_noise_maze}, moderate noise is tolerated, while larger perturbations, corresponding to $10\%$--$40\%$ of a grid movement, lead to increasingly chattering trajectories and eventual constraint violations. These results support the observation that SAD-Flower is robust to moderate dynamics errors, but severe model mismatch can degrade dynamic consistency and constraint satisfaction.
\begin{table}[t]
\centering
\caption{Performance when injecting noise into the dynamics.}
\label{tab:inject_noise_maze}

\small
\setlength{\tabcolsep}{1.5pt}
\begin{tabular}{cccc}
\toprule
\textbf{No noise} 
& $\mathcal{N}(0,0.1)$ 
& $\mathcal{N}(0,0.2)$ 
& $\mathcal{N}(0,0.4)$ \\
\midrule
\includegraphics[width=0.235\columnwidth]{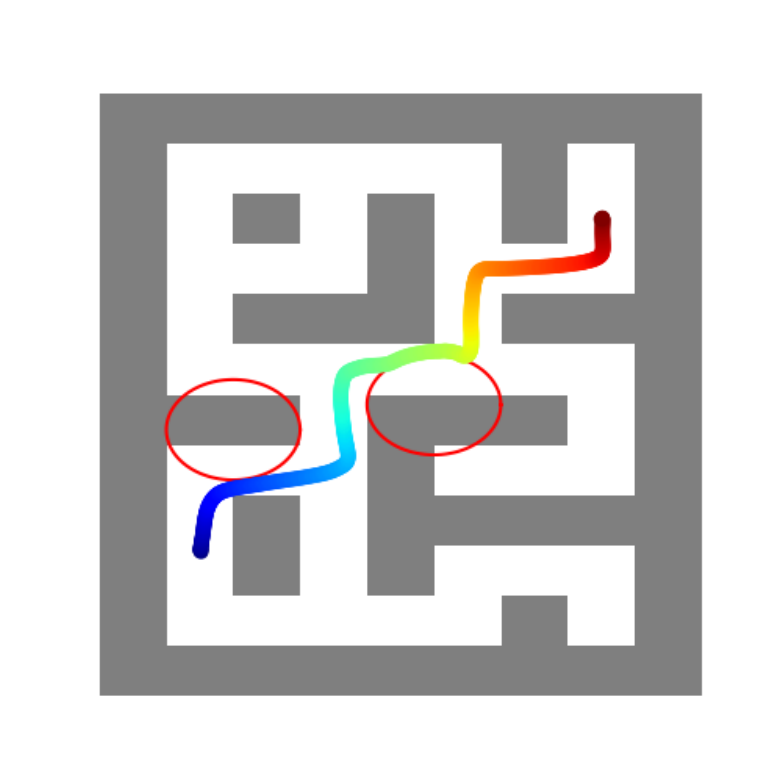}
&
\includegraphics[width=0.235\columnwidth]{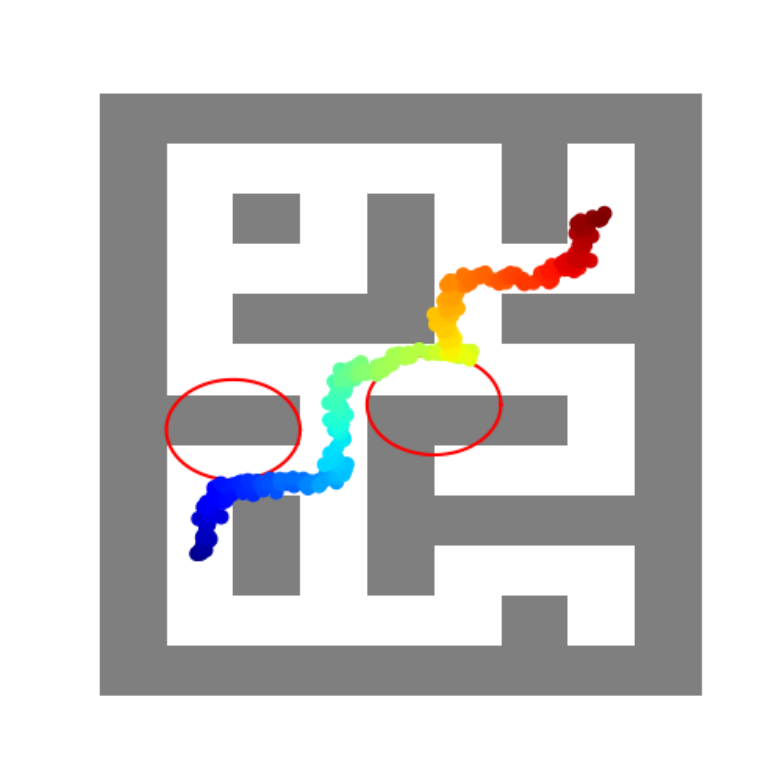}
&
\includegraphics[width=0.235\columnwidth]{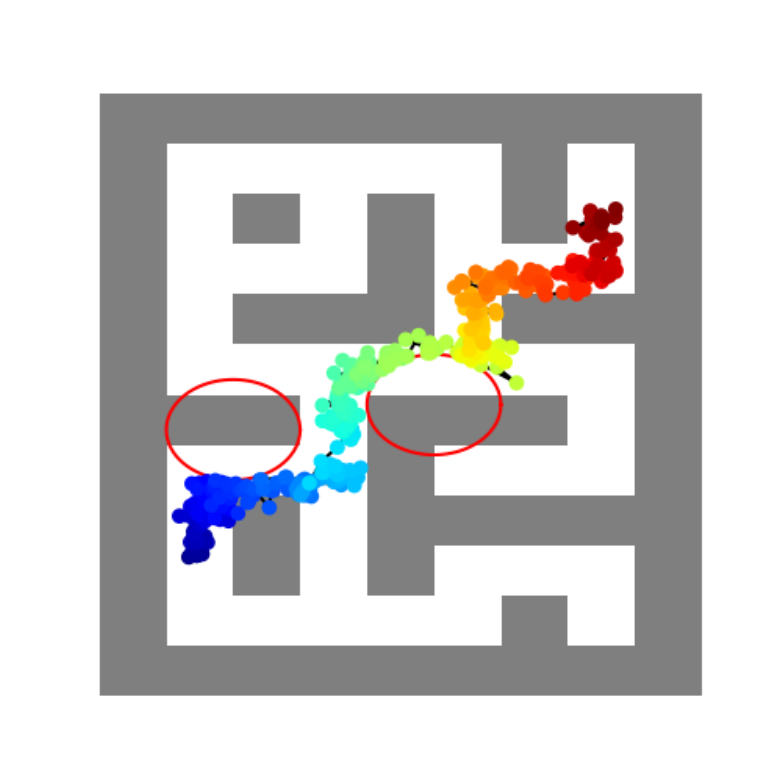}
&
\includegraphics[width=0.235\columnwidth]{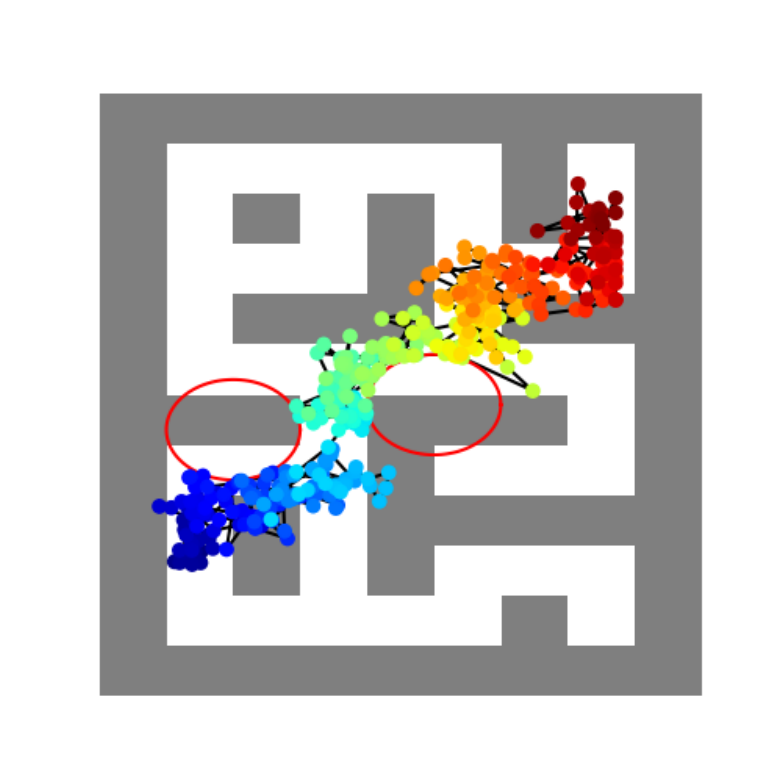}
\\
\bottomrule
\end{tabular}

\end{table}

\paragraph{Flow Matching Implementation}
\label{append:fm_implementation}

We follow the planning setup of Diffuser~\citep{janner2022planning} and train a flow matching planner together with a reward model using the same offline dataset. The flow matching model learns to generate full state-action trajectories from a prior, while the reward model is used to evaluate the generated trajectories.

During inference, we use Monte Carlo selection, following the planning strategy studied in~\citet{feng2025on}. Specifically, we sample multiple unconditioned trajectory candidates from the pretrained flow matching model, evaluate each candidate using the learned reward model, and select the trajectory with the highest predicted return as the planned trajectory. 

\begin{table*} \centering \caption{The Performance of the proposed SAD-Flower with QP and DPCC \citep{romer2024diffusion} with nonlinear optimization. Constraint for the Hopper is the same as the admissible heights in \cref{tab:main_result}. Three DPCCs are proposed in their work. DPCC-T (Temporal consistency), which is shown in \cref{tab:abla_QP}, selects the trajectory that deviates the least from the previous timestep, DPCC-C (Cumulative projection cost) selects the trajectory that has been modified the least by the projection operation, and DPCC-R (Random) selects the trajectory randomly. \looseness=-1} \label{tab:DPCC_compare} \vspace{0.1cm} \resizebox{0.7\textwidth}{!}{ \begin{tabular}{ 
>{\centering\arraybackslash}m{2.0cm} 
>{\centering\arraybackslash}m{1.8cm} 
>{\centering\arraybackslash}m{1.55cm} 
>{\centering\arraybackslash}m{1.55cm} 
>{\centering\arraybackslash}m{1.55cm} 
>{\centering\arraybackslash}m{1.55cm} 
}
\toprule \textbf{Experiment} & \textbf{Metric} & \textbf{DPCC-T} & \textbf{DPCC-C} & \textbf{DPCC-R} & \textbf{Ours} \\ 
\midrule 
%--------------------------- 
\multirow{5}{2cm}{\centering Hopper (Med-Expert)} 
& safety & 0.01$\pm$0.03& 0.02$\pm$0.02& 0.02$\pm$0.02& \textbf{0.00}$\bm{\pm}$\textbf{0.00} \\ 
\cmidrule{2-6} 
& admissib. & \textbf{0.00}$\bm{\pm}$\textbf{0.00}& \textbf{0.00}$\bm{\pm}$\textbf{0.00}& \textbf{0.00}$\bm{\pm}$\textbf{0.00}& \textbf{0.00}$\bm{\pm}$\textbf{0.00} \\ 
\cmidrule{2-6} 
& dyn. consist. & \textbf{0.01}$ \bm{\pm} \textbf{0.01}$&\textbf{0.01}$ \bm{\pm} \textbf{0.00}$& \textbf{0.01}$ \bm{\pm} \textbf{0.00}$& \textbf{0.01}$ \bm{\pm} \textbf{0.01}$ \\ 
\cmidrule{2-6} 
& reward &0.61$\pm$0.07 & 0.65$\pm$0.16 & 0.64$\pm$0.25&\textbf{0.93}$\bm{\pm} \textbf{0.23}$\\ \cmidrule{2-6} & time &4.24$\pm$1.64 & 4.56$\pm$1.72& 6.21$\pm$2.61& \textbf{0.06}$\bm{\pm}$\textbf{0.00}\\ 
\bottomrule 
\end{tabular} } 
\end{table*}

\section{Comparison with Constraint-Aware Baselines} \label{append:other_baseline}
We further compare SAD-Flower against two additional constraint-aware baselines: a reject-sampling method, which offers a simple heuristic for constraint handling, and CoBL-Diffusion \citep{mizuta2024cobl}, which leverages control-theoretic rewards to encourage constraint satisfaction.

The reject-sampling approach is a naive method for constraint satisfaction. It first trains a flow matching model on the original dataset. At inference, it samples a batch of trajectory candidates and discards those violating constraints. However, these candidates may violate test-time constraints since those constraints are unseen during training. As shown in \cref{tab:reject_Sample}, this approach consistently fails to satisfy constraints across all tasks, including Maze2d, Hopper, Walker2d, and Kuka Block-Stacking. Both safety and admissibility violations remain non-zero. This confirms that post-hoc filtering alone is insufficient when the generative model has no knowledge of constraint structure—especially under novel test-time constraints. Even though flow matching can model multimodal behavior patterns, it cannot reliably sample within unseen constraint sets unless such constraints are explicitly incorporated during inference.

CoBL-Diffusion injects physical constraints into a diffusion model by shaping its denoising process with auxiliary rewards derived from Control Barrier Functions (CBFs) and Control Lyapunov Functions (CLFs). This allows it to softly bias generation toward constraint-adherent behaviors, but without any formal guarantees. We evaluate CoBL-Diffusion in the LargeMaze environment, using the same horizon and constraints as in other baselines. As shown in \cref{tab:CoBL}, the method fails to fully enforce safety and admissibility constraints. This is expected given several key limitations: (1) it predicts only actions, whereas SAD-Flower jointly generates states and actions, which allows for more precise control of trajectory behavior; (2) its CBFs target only state constraints, while ours apply to both state and action spaces; (3) it uses CLFs for general stability, not for ensuring formal dynamic consistency as we do; and (4) it lacks prescribed-time scheduling, making it difficult to guarantee constraint satisfaction at the final step. These design choices limit CoBL-Diffusion’s ability to robustly handle complex, temporally structured constraints. In contrast, SAD-Flower provides a framework to reliably enforce constraints even under novel test-time scenarios.

\begin{table}[htbp]
\centering
\caption{Performance of SAD-Flower and reject-sample method across all tasks.}
\label{tab:reject_Sample}

\resizebox{0.75\textwidth}{!}{%
{\small
\begin{tabular}{
    >{\centering\arraybackslash}m{3.0cm}
    >{\centering\arraybackslash}m{3.0cm}
    >{\centering\arraybackslash}m{2.0cm}
    >{\centering\arraybackslash}m{2.0cm}
}
\toprule
\textbf{Experiment}  & \textbf{Metric} & \textbf{Reject-sample} & \textbf{Ours} \\
\midrule
% maze2d-large
\multirow{4}{3cm}{\centering Maze2d (Large)} 
& safety         & $0.16\pm0.29$                   & $\textbf{0.00}\bm{\pm}\textbf{0.00}$ \\
\cmidrule{2-4}
& admissib.      & $0.11\pm0.00$                   & $\textbf{0.00}\bm{\pm}\textbf{0.00}$ \\
\cmidrule{2-4}
& dyn. consist.  & $0.02\pm0.01$                   & $\textbf{0.01}\bm{\pm}\textbf{0.01}$ \\
\cmidrule{2-4}
& reward         & $\textbf{1.52}\pm\textbf{0.28}$ & $1.42\pm0.52$ \\
\midrule
% maze2d-umaze
\multirow{4}{3cm}{\centering Maze2d (Umaze)} 
& safety         & $0.04\pm0.09$                   & $\textbf{0.00}\bm{\pm}\textbf{0.00}$ \\
\cmidrule{2-4}
& admissib.      & $0.10\pm0.01$                   & $\textbf{0.00}\bm{\pm}\textbf{0.00}$ \\
\cmidrule{2-4}
& dyn. consist.  & $\textbf{0.01}\bm{\pm}\textbf{0.01}$        & $\textbf{0.01}\bm{\pm}\textbf{0.01}$ \\
\cmidrule{2-4}
& reward         & $\textbf{2.91}\pm\textbf{0.65}$ & $2.66\pm0.88$ \\
\midrule\midrule
% hopper-expert
\multirow{4}{3cm}{\centering Hopper (Med-Expert)} 
& safety         & $0.14\pm0.18$                   & $\textbf{0.00}\bm{\pm}\textbf{0.00}$ \\
\cmidrule{2-4}
& admissib.      & $0.05\pm0.02$                   & $\textbf{0.00}\bm{\pm}\textbf{0.00}$ \\
\cmidrule{2-4}
& dyn. consist.  & $0.02\pm0.01$                   & $\textbf{0.01}\bm{\pm}\textbf{0.01}$ \\
\cmidrule{2-4}
& reward         & $0.49\pm0.33$ & $\textbf{0.93}\pm\textbf{0.23}$ \\
\midrule
% hopper-medium
\multirow{4}{3cm}{\centering Hopper (Medium)} 
& safety         & $0.03\pm0.04$                   & $\textbf{0.00}\bm{\pm}\textbf{0.00}$ \\
\cmidrule{2-4}
& admissib.      & $0.06\pm0.03$                   & $\textbf{0.00}\bm{\pm}\textbf{0.00}$ \\
\cmidrule{2-4}
& dyn. consist.  & $0.02\pm0.02$        & $\textbf{0.01}\bm{\pm}\textbf{0.01}$ \\
\cmidrule{2-4}
& reward         & $0.24\pm0.14$ & $\textbf{0.34}\pm\textbf{0.03}$ \\
\midrule
% walker2d-expert
\multirow{4}{3cm}{\centering Walker2d (Med-Expert)} 
& safety         & $0.08\pm0.10$                   & $\textbf{0.00}\bm{\pm}\textbf{0.00}$ \\
\cmidrule{2-4}
& admissib.      & $0.08\pm0.08$                   & $\textbf{0.00}\bm{\pm}\textbf{0.00}$ \\
\cmidrule{2-4}
& dyn. consist.  & $\textbf{0.04}\bm{\pm}\textbf{0.06}$        & $\textbf{0.04}\bm{\pm}\textbf{0.04}$ \\
\cmidrule{2-4}
& reward         & $\textbf{0.97}\pm\textbf{0.22}$ & $0.89\pm0.32$ \\
\midrule
% walker2d-medium
\multirow{4}{3cm}{\centering Walker2d (Medium)} 
& safety         & $0.18\pm0.15$                   & $\textbf{0.00}\bm{\pm}\textbf{0.00}$ \\
\cmidrule{2-4}
& admissib.      & $0.16\pm0.10$                   & $\textbf{0.00}\bm{\pm}\textbf{0.00}$ \\
\cmidrule{2-4}
& dyn. consist.  & $\textbf{0.07}\bm{\pm}\textbf{0.17}$        & $\textbf{0.07}\bm{\pm}\textbf{0.15}$ \\
\cmidrule{2-4}
& reward         & $\textbf{0.68}\pm\textbf{0.19}$  & $0.42\pm0.23$ \\
\midrule\midrule
% Kuka
\multirow{2}{3cm}{\centering KUKA Block Stacking} 
& safety         & $0.01\pm0.01$                   & $\textbf{0.00}\bm{\pm}\textbf{0.00}$ \\
\cmidrule{2-4}
& reward         & $0.44\pm0.67$  & $\textbf{0.45}\pm\textbf{0.21}$ \\

\bottomrule
\end{tabular}
}%
}
\vspace{-0.3cm}
\end{table}

\begin{table}[htbp]
\centering
\caption{Performance of SAD-Flower and CoBL-Diffusion in the navigation task.}
\label{tab:CoBL}
\resizebox{0.75\textwidth}{!}{%
{\small
\begin{tabular}{
    >{\centering\arraybackslash}m{3.0cm}
    >{\centering\arraybackslash}m{3.0cm}
    >{\centering\arraybackslash}m{2.0cm}
    >{\centering\arraybackslash}m{2.0cm}
}
\toprule
\textbf{Experiment}  & \textbf{Metric} & \textbf{CoBL-Diffusion} & \textbf{Ours} \\
\midrule
\multirow{4}{3cm}{\centering Maze2d (Large)} 
& safety         & 0.01$\pm$0.04                   & \textbf{0.00}$\bm{\pm}$\textbf{0.00} \\
\cmidrule{2-4}
& admissib.      & 0.23$\pm$0.33                   & \textbf{0.00}$\bm{\pm}$\textbf{0.00} \\
\cmidrule{2-4}
& dyn. consist.  & \textbf{0.00}$\bm{\pm}$\textbf{0.00}     & 
0.01$\pm$0.01 \\
\cmidrule{2-4}
& reward         & 0.15$\pm$0.18 & \textbf{1.42}$\pm$\textbf{0.52} \\
\bottomrule
\end{tabular}
}%
}
\vspace{-0.3cm}
\end{table}

\section{Environment Details} \label{append:env}
We evaluate our method across a variety of trajectory planning domains, summarized in \cref{tab:env_setting}. The benchmark includes navigation (Maze2d-Umaze, Maze2d-Large), locomotion (Hopper, Walker2d), and robotic manipulation (Kuka Block-Stacking). Environments are simulated using MuJoCo \citep{todorov2012} or PyBullet \citep{coumans2016pybullet}, and cover a range of state and action dimensions.

We use offline datasets to train the generative and control models for each benchmark. As shown in \cref{tab:dataset_setting}, the datasets for locomotion and navigation tasks are retrieved from the D4RL benchmark suite \citep{maze_env}, with varying sizes (\eg medium vs. medium-expert settings). For locomotion environments, datasets are collected using soft actor-critic (SAC) policies \citep{haarnoja2018soft}, either partially trained or mixed with expert-level demonstrations. For the Kuka block-stacking task \citep{janner2022planning}, data is generated via the PDDLStream planner \citep{garrett2020pddlstream}, which provides feasible robotic manipulation trajectories in structured stacking scenarios.

\begin{table}[h]
\centering
\caption{Settings for each tasks.}
\resizebox{0.6\textwidth}{!}{
\label{tab:env_setting}
\begin{tabular}{lccccc}
\toprule
\textbf{Environment} & \textbf{Simulator} & \textbf{Obs. Dim.} & \textbf{Action Dim.} \\
\midrule
Maze2d-Umaze-v1     & MuJoCo             & 4                  & 2                    \\
Maze2d-Large-v1     & MuJoCo             & 4                  & 2                    \\
Hopper              & MuJoCo             & 11                 & 3                    \\
Walker2d            & MuJoCo             & 17                 & 6                    \\
Kuka Block-Stacking & PyBullet           & 39                  & -                    \\
\bottomrule
\end{tabular}
}
\end{table}
\begin{table}[ht]
\centering
\caption{Dataset details for each benchmark environment, including the number of trajectories and the source or algorithm used to generate the data.}
\resizebox{0.8\textwidth}{!}{
\label{tab:dataset_setting}
\begin{tabular}{lccc}
\toprule
\textbf{Environment} & \textbf{\# of Trajectories} & \textbf{Source / Generation Method} \\
\midrule
Maze2d-Umaze-v1 & $10^6$ & D4RL \citep{maze_env} \\
Maze2d-Large-v1 & $4 \times 10^6$ & D4RL \citep{maze_env} \\
Hopper-medium & $10^6$ & Partially trained SAC \citep{haarnoja2018soft} \\
Hopper-medium-expert & $2 \times 10^6$ & Mixture of expert and partial SAC \\
Walker2d-medium & $10^6$ & Partially trained SAC \citep{haarnoja2018soft} \\
Walker2d-medium-expert & $2 \times 10^6$ & Mixture of expert and partial SAC \\
Kuka Block-stacking & 10,000 & PDDLStream planner \citep{garrett2020pddlstream} \\
\bottomrule
\end{tabular}
}
\end{table}

\section{Constraint Settings in Each Tasks} \label{append:constraint}
We introduce task-specific state and action constraints to evaluate the ability of generative planners to handle safety and admissibility under diverse and challenging conditions. The test-time constraints are deliberately chosen to be more restrictive and often unseen during training, making constraint satisfaction a non-trivial requirement.

\paragraph{Maze2d.} 
The goal in Maze2d environments is to generate a feasible trajectory from a randomly sampled initial position to a randomly sampled goal location. The generative model is conditioned on both endpoints and produces full trajectories through the maze.

To evaluate constraint satisfaction, we introduce two novel, unseen obstacles at test time that do not completely block the path but add non-trivial planning constraints. The first is a superellipse-shaped obstacle defined as:
\begin{equation}
\left( \frac{x - x_0}{a} \right)^2 + \left( \frac{y - y_0}{b} \right)^2 \geq 1,
\end{equation}
where $(x, y) \in \mathbb{R}^2$ is a trajectory state and $(x_0, y_0) \in \mathbb{R}^2$ is the center of the obstacle. The parameters $a > 0$ and $b > 0$ control the width and height of the obstacle.

The second is a higher-order polynomial barrier defined as:
\begin{equation}
\left( \frac{x - x_0}{a} \right)^4 + \left( \frac{y - y_0}{b} \right)^4 \geq 1.
\end{equation}
This formulation results in sharper obstacle boundaries and makes naive post-processing or trajectory truncation ineffective due to the nonlinearity of the feasible region.

At test time, we tighten the original action constraint $\bm{a}\in[-1,1]^2$ by a relative margin of $0.1$ on each side, yielding
\begin{equation}
    \bm{a} \in [-1+0.1,\, 1-0.1]^2 = [-0.9,0.9]^2.
\end{equation}

Both \texttt{Maze2d-Umaze-v1} and \texttt{Maze2d-Large-v1} contain these two novel obstacles at test time to assess generalization to unseen constraints.

\paragraph{Hopper and Walker2d.} 
In the locomotion tasks, we impose test-time constraints that limit the robot's vertical motion to avoid collisions with overhead obstacles. Specifically, the height of the robot's torso must remain below a fixed roof height $z$, defined as:
\begin{equation}
s < z,
\end{equation}
where $s$ is the vertical position of the torso. We evaluate this constraint under increasingly restrictive settings, such as $z = 1.6$, $1.4$, and $1.25$, to assess robustness to constraint tightening.

This state constraint is often in conflict with the task objective, which rewards forward jumping or walking. As a result, satisfying the constraint typically requires sacrificing task performance. Additionally, the control inputs are constrained by an action box constraint:
\begin{equation}
\bm{a} \in [-1, 1]^d,
\end{equation}
where $d$ is the dimensionality of the action space (3 for Hopper and 6 for Walker2d).

\paragraph{Kuka Block-Stacking.}
For the Kuka block-stacking task, the generative model is conditioned on object locations and outputs joint trajectories for manipulation. To simulate partial system degradation or workspace reconfiguration, we apply a tighter state constraint on the robot’s joint positions. Specifically, we scale the original feasible joint limits by a factor of 0.9:
\begin{equation}
\bm{q} \in 0.9 \cdot [\bm{q}_{\text{min}}, \bm{q}_{\text{max}}],
\end{equation}
where $\bm{q}_{\text{min}}$ and $\bm{q}_{\text{max}}$ represent the original lower and upper bounds of each joint. No explicit action constraint is applied in this task.

These constraints, particularly when unseen during training, pose a significant challenge for planning and provide a rigorous benchmark for evaluating constraint satisfaction and generalization capabilities.

\section{Training Details} \label{append:train}
We provide implementation details of all baseline methods and our proposed approach, including training configurations, hyperparameters, and model architectures.

All baseline methods are trained using their official or publicly available codebases: Diffuser follows the implementation of \citet{janner2022planning}, Truncation, Classifier Guidance, and SafeDiffuser are implemented based on \citet{xiao2023safediffuser}, Decision Diffuser follows \citet{ajayDD}, and Flow Matching uses the codebase from \citet{feng2025on}. Our method is built on top of the same Flow Matching framework from \citet{feng2025on}.

\paragraph{Hyperparameters.}
Table~\ref{tab:hyperparams} shows the training hyperparameters for all baselines and our method across different environments. All methods use the same horizon and batch size per environment. Note that our method shares training settings with the flow matching baseline to ensure a fair comparison.

\begin{table}[h]
\centering
\small
\caption{Training hyperparameters for each method across environments.}
\label{tab:hyperparams}
\begin{tabular}{l|cccccc}
\toprule
\textbf{Maze2d-Umaze} & Diffuser & Truncation & CG & S-Diffuser & FM & SAD-Flower \\
\midrule
Batch Size & 32 & 32 & 32 & 32 & 32 & 32 \\
Learning Rate & $2\text{e}^{-4}$ & $2\text{e}^{-4}$ & $2\text{e}^{-4}$ & $2\text{e}^{-4}$ & $2\text{e}^{-4}$ & $2\text{e}^{-4}$ \\
Steps & $2\text{e}^6$ & $2\text{e}^6$ & $2\text{e}^6$ & $1\text{e}^6$ & $1\text{e}^6$ & $1\text{e}^6$ \\
\midrule
\textbf{Maze2d-Large} & Diffuser & Truncation & CG & S-Diffuser & FM & SAD-Flower \\
\midrule
Batch Size & 32 & 32 & 32 & 32 & 32 & 32 \\
Learning Rate & $2\text{e}^{-4}$ & $2\text{e}^{-4}$ & $2\text{e}^{-4}$ & $2\text{e}^{-4}$ & $2\text{e}^{-4}$ & $2\text{e}^{-4}$ \\
Steps & $2\text{e}^6$ & $2\text{e}^6$ & $2\text{e}^6$ & $1\text{e}^6$ & $1\text{e}^6$ & $1\text{e}^6$ \\
\midrule
\textbf{Hopper} & Diffuser & Truncation & CG & S-Diffuser & FM & SAD-Flower \\
\midrule
Batch Size & 32 & 32 & 32 & 32 & 32 & 32 \\
Learning Rate & $2\text{e}^{-4}$ & $2\text{e}^{-4}$ & $2\text{e}^{-4}$ & $2\text{e}^{-4}$ & $2\text{e}^{-4}$ & $2\text{e}^{-4}$ \\
Steps & $2\text{e}^6$ & $2\text{e}^6$ & $2\text{e}^6$ & $1\text{e}^6$ & $1\text{e}^6$ & $1\text{e}^6$ \\
\midrule
\textbf{Walker2d} & Diffuser & Truncation & CG & S-Diffuser & FM & SAD-Flower \\
\midrule
Batch Size & 32 & 32 & 32 & 32 & 32 & 32 \\
Learning Rate & $2\text{e}^{-4}$ & $2\text{e}^{-4}$ & $2\text{e}^{-4}$ & $2\text{e}^{-4}$ & $2\text{e}^{-4}$ & $2\text{e}^{-4}$ \\
Steps & $2\text{e}^6$ & $2\text{e}^6$ & $2\text{e}^6$ & $1\text{e}^6$ & $1\text{e}^6$ & $1\text{e}^6$ \\
\midrule
\textbf{Kuka Block-Stacking} & Diffuser & Truncation & CG & S-Diffuser & FM & SAD-Flower \\
\midrule
Batch Size & 32 & 32 & 32 & 32 & 32 & 32 \\
Learning Rate & $2\text{e}^{-5}$ & $2\text{e}^{-5}$ & $2\text{e}^{-5}$ & $2\text{e}^{-5}$ & $2\text{e}^{-4}$ & $2\text{e}^{-4}$ \\
Steps & $7\text{e}^5$ & $7\text{e}^5$ & $7\text{e}^5$ & $7\text{e}^5$ & $7\text{e}^5$ & $7\text{e}^5$ \\
\bottomrule
\end{tabular}
\end{table}

\paragraph{Forward Dynamics Model Training.}
To support constraint enforcement in SAD-Flower, we train a forward dynamics model for each environment using the same dataset used to train the generative models. This ensures a fair comparison without introducing additional supervision. The training details are summarized in Table~\ref{tab:forward_model}.

\begin{table}[h!]
\centering
\scriptsize
\caption{Training settings for the forward dynamics models used in SAD-Flower.}
\label{tab:forward_model}
\resizebox{0.65\textwidth}{!}{
\begin{tabular}{l|ccc}
\toprule
Environment & Batch Size & Learning Rate & Steps \\
\midrule
Maze2d (Umaze/Large) & 256 & $1\times10^{-3}$ & $1\times10^6$ \\
Hopper & 256 & $1\times10^{-3}$ & $1\times10^6$ \\
Walker2d & 256 & $1\times10^{-3}$ & $1\times10^6$ \\
\bottomrule
\end{tabular}
}
\end{table}

For the Maze2d environments, a known analytical dynamics model can also be derived:
\begin{equation}
\begin{bmatrix}
x({k+1}) \\
y({k+1}) \\
v_{x}(k+1) \\
v_y(k+1)
\end{bmatrix} =
\begin{bmatrix}
1 & 0 & dt & 0 \\
0 & 1 & 0 & dt \\
0 & 0 & 1 & 0 \\
0 & 0 & 0 & 1
\end{bmatrix}
\begin{bmatrix}
x(k) \\
y(k) \\
v_{x}(k) \\
v_{y}(k)
\end{bmatrix} +
\begin{bmatrix}
0.5\alpha dt^2 & 0 \\
0 & 0.5\alpha dt^2 \\
\alpha dt & 0 \\
0 & \alpha dt
\end{bmatrix}
\begin{bmatrix}
u_{x}(k) \\
u_{y}(k)
\end{bmatrix}
\end{equation}
where $[x_k, y_k, v_{x,k}, v_{y,k}]^T$ is the system state representing position and velocity, $[u_{x,k}, u_{y,k}]^T$ is the input force, $dt$ is the simulation time step, and $\alpha$ is the gear ratio divided by mass (due to primitive joint control).

\paragraph{Model Architecture.}
Diffusion-based models (Diffuser, Truncation, Classifier Guidance, SafeDiffuser) use the U-Net architecture with residual temporal convolutions, group normalization, and Mish nonlinearities, as described in \citep{janner2022planning}. Flow Matching and SAD-Flower use the Transformer-based backbone proposed by \citep{feng2025on}, consisting of 8 layers with a hidden dimension of 256. The forward model in SAD-Flower is a feedforward neural network with 3 hidden layers of size 512, where the input dimension is the sum of the observation dimension and the action dimension, and the output is the observation dimension.

\section{Computational Resources} \label{appedx:compute}

All experiments were conducted using four high-performance workstations with identical or near-identical configurations. Each workstation is equipped with an AMD EPYC series CPU, an NVIDIA Tesla P100 GPU, and 16 GB of GPU memory. The specific hardware details are summarized in \cref{tab:hardware}.

\begin{table}[h]
\centering
\small
\caption{Hardware specifications of workstations used for training and evaluation.}
\label{tab:hardware}
\begin{tabular}{c|c|c|c}
\toprule
\textbf{Workstation} & \textbf{CPU} & \textbf{GPU} & \textbf{GPU RAM} \\
\midrule
1 & AMD EPYC 7542 & NVIDIA Tesla P100 & 16 GB \\
2 & AMD EPYC 7542 & NVIDIA Tesla P100 & 16 GB \\
3 & AMD EPYC 7742 & NVIDIA Tesla P100 & 16 GB \\
4 & AMD EPYC 7542 & NVIDIA Tesla P100 & 16 GB \\
\bottomrule
\end{tabular}
\vspace{-0.2cm}
\end{table}

\section{Computation Analysis} \label{append:compute_time}

We report the detailed computational costs of SAD-Flower and all baseline methods across our benchmark tasks in \cref{tab:main_result_time}. As expected, SAD-Flower incurs greater computational cost compared to unconstrained generative planners such as Diffuser and FM, due to the overhead introduced by enforcing multiple constraints during sampling. Nonetheless, in some domains—such as Hopper-Medium-Expert and Kuka Block-Stacking—SAD-Flower achieves a comparable runtime to SafeDiffuser, highlighting the efficiency gains enabled by our prescribed-time control formulation. Importantly, SAD-Flower is the only method that consistently satisfies all three constraint classes—state, action, and dynamic consistency—across all evaluated domains. No faster baseline provides this level of safety and reliability.
\begin{table}[h]
\vspace{-0.3cm}
\caption{Computation analysis (sec) of the proposed SAD-Flower and baselines across navigation, locomotion, and manipulation tasks.}
\label{tab:main_result_time}
\vspace{-0.1cm}
\centering
\resizebox{0.9\textwidth}{!}{
\begin{tabular}{
    >{\centering\arraybackslash}m{3.5cm}
    >{\centering\arraybackslash}m{1.55cm}
    >{\centering\arraybackslash}m{1.55cm}
    >{\centering\arraybackslash}m{1.55cm}
    >{\centering\arraybackslash}m{1.55cm}
    >{\centering\arraybackslash}m{1.55cm}
    >{\centering\arraybackslash}m{1.61cm}
    >{\centering\arraybackslash}m{1.55cm}
}
\toprule
\textbf{Experiment}  & \textbf{Diffuser} & \textbf{Trunc} & \textbf{CG}  & \textbf{FM} & \textbf{S-Diffuser} & \textbf{D-Diffuser} & \textbf{Ours} \\
\midrule
\centering Maze2d (Large)
 & $0.01\pm0.01$ & $0.02\pm0.01$ & $0.04\pm0.01$ & $0.06\pm0.02$ & $0.06\pm0.01$ & $0.01\pm0.01$ & $0.28\pm0.02$ \\
\midrule
\centering Maze2d (Umaze)
  & $0.01\pm0.01$ & --- & $0.04\pm0.01$ & $0.02\pm0.01$ & $0.06\pm0.02$ & $0.01\pm0.01$ & $0.09\pm0.01$ \\
\midrule\midrule
\centering Hopper (Med-Expert)
 & $0.05\pm0.01$ & $0.06\pm0.02$ & $0.06\pm0.01$ & $0.01\pm0.01$ & $0.11\pm0.02$ & $0.02\pm0.01$ & $0.14\pm0.01$ \\
\midrule
\centering Hopper (Medium)
 & $0.05\pm0.01$ & $0.06\pm0.01$ & $0.06\pm0.02$ & $0.01\pm0.01$ & $0.09\pm0.02$ & $0.02\pm0.01$ & $0.13\pm0.02$ \\
\midrule
\centering Walker2D (Med-Expert)
 & $0.06\pm0.02$ & $0.06\pm0.01$ & $0.05\pm0.01$ & $0.01\pm0.01$ & $0.09\pm0.02$ & $0.02\pm0.01$ & $0.15\pm0.01$ \\
\midrule
\centering Walker2D (Medium)
  & $0.04\pm0.01$ & $0.06\pm0.03$ & $0.07\pm0.02$ & $0.01\pm0.01$ & $0.10\pm0.02$ & $0.02\pm0.01$ & $0.14\pm0.01$ \\
\midrule\midrule
\centering KUKA Block Stacking
  & $0.70\pm0.01$ & $0.84\pm0.01$ & $0.86\pm0.01$ & $0.47\pm0.06$ & $0.78\pm0.01$ & $0.04\pm0.01$ & $0.74\pm0.08$ \\
\bottomrule
\end{tabular}
}
\vspace{-0.2cm}
\end{table}

\section{Deployment on Real-World Robotic Platforms}

Recent works have demonstrated the deployment of diffusion- and flow-based models on physical robotic systems \citep{chi2023diffusion, yang2025uniconflow}, suggesting that generative trajectory models can be integrated into real-world pipelines involving perception, state estimation, and execution. SAD-Flower is designed as a planning-level method that generates a full trajectory over a finite horizon before execution. In this sense, it is open-loop within each rollout, but in practical robotic systems, such planners are typically used in a receding-horizon manner: plan, execute part of the trajectory, update the state estimate, and replan. This avoids strict real-time control-frequency requirements while still allowing feedback through replanning. During sampling, our method only adds a QP-based correction step to the generative process, so offline trajectory generation remains compatible with standard flow-based planning pipelines. Moreover, the resulting trajectories come with formal guarantees of safety, admissibility, and dynamic consistency under the assumptions of our theory.

At the same time, deploying SAD-Flower on physical robots introduces several practical challenges. Long-horizon planning and high-dimensional systems may increase computation due to repeated model inference and QP solving across denoising or flow-integration steps. Recent efficient generative planning methods, such as DiffuserLite \citep{dong2024diffuserlite} and Habi \citep{lu2025habitizing}, may help reduce this cost and could be incorporated into our framework. In addition, SAD-Flower requires a dynamic model for enforcing dynamic consistency. Although such a model can be learned from onboard sensor data, sensor noise, model mismatch, and execution uncertainty must be carefully handled in real deployments. Uncertainty-aware models, such as Gaussian processes, and robust safety tools, such as robust CBFs, provide promising mechanisms for accounting for these effects. Another challenge is formulating safety constraints from high-dimensional observations, such as images or point clouds. However, this direction is compatible with our formulation: time-varying constraints, such as moving obstacles, can be handled using time-dependent CBFs, and the signed distance functions used in SAD-Flower provide a flexible obstacle representation \citep{long2021learning} that can be estimated from point clouds obtained by standard perception systems, such as LiDAR or depth cameras.

Overall, SAD-Flower provides a principled planning-level mechanism for generating constrained trajectories with formal guarantees, while introducing only a structured QP correction during sampling. Although real-robot deployment still requires addressing computation, model uncertainty, and perception-dependent constraint formulation, these challenges are active areas of research in robotics and generative planning. We believe they can be progressively addressed through improved sampling efficiency, robust dynamics learning, and better integration with perception modules, and we leave real-system deployment as an important direction for future work.
\vspace*{\fill}
\end{document}